\documentclass[a4paper,11pt]{article}





\usepackage{amsthm}
\usepackage{amsmath}
\usepackage{amssymb}
\usepackage{bm}
\usepackage{mathrsfs}
\usepackage[dvips]{graphicx}

\usepackage{algorithmic}
\usepackage{algorithm}

\usepackage{color}

\usepackage{url}

\usepackage{supertabular}






\theoremstyle{definition}

\theoremstyle{remark}




\numberwithin{thm}{section}



\DeclareMathAlphabet{\mathsfsl}{OT1}{cmss}{m}{sl}




\renewcommand{\phi}{\varphi}


\newcommand{\latop}[2]{\genfrac{}{}{0pt}{}{#1}{#2}}




\newcommand{\R}{\mathbb{R}}


\newcommand{\argmin}{\operatorname*{arg\,min}}





\newcommand{\trace}{\operatorname{tr}}












\newcommand{\bx}{\boldsymbol{x}}

\newcommand{\bT}{\boldsymbol{T}}

\newcommand{\by}{\boldsymbol{y}}

\newcommand{\bSigma}{\boldsymbol\Sigma}

\def\Eb{E_{{b}}}
\def\Eg{E_{{g}}}
\def\Egl{E_{{gl}}}

\def\reals{\mathbb{R}}
\def\bx{\boldsymbol{x}}
\def\bga{\boldsymbol{\gamma}}
\def\beps{\boldsymbol{\epsilon}}
\def\be{\boldsymbol{e}}
\def\bt{\boldsymbol{t}}

\def\b0{\boldsymbol{0}}

\def\bSigma{\boldsymbol\Sigma}

\def\bv{\boldsymbol{v}}

\def\bh{\boldsymbol{h}}

\def\bI{\boldsymbol{I}}

\usepackage{tikz}
\usepackage{authblk}
\usepackage[moderate]{savetrees}
\tikzset{mynode/.style={inner sep=1pt,fill,outer sep=0,circle}}
\usepackage[english]{babel}
\usepackage[margin=1in]{geometry}
\usepackage{multirow}
\usepackage{amssymb}
\usepackage{amsmath}
\usepackage{subfig}
\usepackage{graphicx}
\usepackage{amsthm}
\usepackage[toc,page]{appendix}
\usepackage{xcolor}
\DeclareMathOperator{\spann}{Span}
\captionsetup{font=footnotesize}
\begin{document}

\newtheoremstyle{mystyle}
	{}
	{}
	{\itshape}
	{}
	{\bfseries}
	{.}
	{ }
	{}%
	\theoremstyle{mystyle}

\newtheorem{innercustomgeneric}{\customgenericname}
\providecommand{\customgenericname}{}
\newcommand{\newcustomtheorem}[2]{%
  \newenvironment{#1}[1]
  {%
   \renewcommand\customgenericname{#2}%
   \renewcommand\theinnercustomgeneric{##1}%
   \innercustomgeneric
  }
  {\endinnercustomgeneric}
}

\newcustomtheorem{customthm}{Theorem}
\newcustomtheorem{customlemma}{Lemma}
\newcustomtheorem{customcorollary}{Corollary}

\newtheorem{theorem}{Theorem}[section]
\newtheorem{lemma}[theorem]{Lemma}
\newtheorem{proposition}[theorem]{Proposition}
\newtheorem{definition}[theorem]{Definition}
\newtheorem{corollary}[theorem]{Corollary}
\newcommand{\myparallel}{{\mkern3mu\vphantom{\perp}\vrule depth 0pt\mkern2mu\vrule depth 0pt\mkern3mu}}
\author[1]{Gilad Lerman}
\author[1]{Yunpeng Shi}
\author[2]{Teng Zhang}
\affil[1]{School of Mathematics, University of Minnesota}
\affil[2]{Department of Mathematics, University of Central Florida\authorcr {{\tt \{lerman, shixx517\}@umn.edu}, \tt Teng.Zhang@ucf.edu}\vspace{1.5ex}}

\setcounter{Maxaffil}{0}
\renewcommand\Affilfont{\small}
\title{Exact Camera Location Recovery by Least Unsquared Deviations\thanks{This work was supported by NSF award DMS-14-18386.  We are grateful for the anonymous reviewers and the action editor for the careful reading of the manuscript and the useful suggestions.}}
\date{}
\maketitle
\begin{abstract}We establish exact recovery for the Least Unsquared Deviations (LUD) algorithm of \"{O}zyesil and Singer. More precisely, we show that for sufficiently many cameras
with given corrupted pairwise directions, where both camera locations and pairwise directions are generated by a special probabilistic model, the LUD algorithm exactly recovers the camera locations with high probability. A similar exact  recovery guarantee for camera locations was established for the ShapeFit algorithm by Hand, Lee and Voroninski, but with typically less corruption.
\end{abstract}
\section{Introduction}
The Structure from Motion (SfM) problem asks to recover the 3D structure of an object from its 2D images. These images are taken by many cameras at different orientations and locations. In order to recover the underlying structure, both the orientations and locations of the cameras need to be estimated~\cite{sfmsurvey_2017}.

The common procedure is to first estimate the relative orientations between pairs of cameras from the corresponding essential matrices and then use them to obtain the pairwise directions between cameras \cite{multiviewbook}.  A pairwise direction between two cameras is the normalized vector of their relative location.  The global orientations up to an arbitrary rotation can be concluded via synchronization from the pairwise orientations \cite{Nachimson_LS,ChatterjeeG13_rotation, Govindu04_Lie,HartleyAT11_rotation,MartinecP07_rotation, OzyesilSB15_SDR}. The locations can be derived from the pairwise directions \cite{Nachimson_LS, BrandAT04_LS,  GoldsteinHLVS16_shapekick, Govindu01_LS, Govindu04_Lie,HandLV15,MoulonMM13_Linfty, cvprOzyesilS15,OzyesilSB15_SDR, TronV09_CLS1, TronV14_CLS2}.

This paper mathematically addresses the latter subproblem of estimating global camera locations when given corrupted pairwise directions with missing values. In doing so, it follows the corruption model and the mathematical problem of Hand, Lee and Voroninski (HLV) \cite{HandLV15}, which are described next.

\textbf{The HLV model:} Assume $n$ cameras, indexed by $[n]=\{1,2,\dots, n\}$, with locations $\bt^*_1, \ldots, \bt^*_n \subset \mathbb{R}^3$, i.i.d.~sampled from $N(\b0,\bI)$.  Let $G([n],E)$ be  drawn from the Erd\"{o}s-R\'{e}nyi ensemble $G(n,p)$ of $n$ vertices with probability of connection $p$. That is, an edge with index $ij\in[n]\times[n]$ is independently drawn between cameras $i$ and $j$ with probability $p$. For any $i$, $j\in [n]$, $ij$ and $ji$ appear at most once in the index set of edges $E$ so that there is no repetition. For each edge with index $ij\in E$, a possibly corrupted pairwise direction vector $\bga_{ij}\in S^2$ is assigned. More precisely, $E$ is partitioned into sets of ``good" and ``bad" edges, $\Eg$ and $\Eb$ respectively, and the pairwise direction vectors are obtained in each set as follows: If $ij\in \Eg$, then $\bga_{ij}$ is the ground truth pairwise direction: \begin{equation}\label{eq:gammastar}
\bga_{ij}^*=\frac{\bt_i^*-\bt_j^*}{\|\bt_i^*-\bt_j^*\|,}
 \end{equation}
where $\|\cdot\|$ denotes the Euclidean norm.
 Otherwise, $\{\bga_{ij}\}_{ij\in \Eb}$ are arbitrarily assigned in $S^2$.  
 The level of corruption of the HLV model is quantified by  $\epsilon_b=\frac{1}{n}(\text{maximal degree of } \Eb)$. 
 The parameters of the HLV model are $n$, $p$ and $\epsilon_b$.

\textbf{The HLV problem and its solutions:} Given data sampled from the HLV model and assuming a bound on the corruption parameter $\epsilon_b$, the exact recovery problem is to reconstruct,
up to ambiguous translation and scale, $\{\bt_i^*\}_{i=1}^n$ from $\{\bga_{ij}\}_{ij\in E}$. Hand, Lee and Voroninski addressed this problem while assuming $\epsilon_b=O(p^5/\log^3 n)$ and using their ShapeFit algorithm \cite{HandLV15}. Here we address this problem with the weaker assumption  $\epsilon_b=O(p^{7/3}/\log^{9/2} n)$, while using the LUD algorithm \cite{cvprOzyesilS15}.

\subsection{Previous Works}
In the past two decades, a variety of algorithms have been proposed for estimating global camera locations from corrupted pairwise directions~\cite{sfmsurvey_2017}. The earliest methods use least squares optimization \cite{Nachimson_LS, BrandAT04_LS,  Govindu01_LS} and often result in collapsed solutions. That is, the camera locations are usually wrongly estimated around few points. Constrained Least Squares (CLS) \cite{TronV09_CLS1, TronV14_CLS2} utilizes a least squares formulation with an additional constraint to avoid collapsed solutions. Another least squares solver with anti-collapse constraint is semidefinite relaxation (SDR) \cite{OzyesilSB15_SDR}. Its constraint is non-convex and makes it hard to solve even after convex relaxation. Other non-least-squares solvers include the $L_{\infty}$ method \cite{MoulonMM13_Linfty} and the Lie-Algebraic averaging method~\cite{Govindu04_Lie}. However, all the above methods are sensitive to outliers.

Recently, \"{O}zyesil and Singer~\cite{cvprOzyesilS15} proposed the Least Unsquared Deviation (LUD) algorithm and numerically demonstrated its robustness to outliers and noise. Given the pairwise directions $\{\bga_{ij}\}_{ij\in E}$, the LUD algorithm estimates the camera locations $\{\bt^*_i\}_{i=1}^n$ by $\{\hat\bt_i\}_{i=1}^n \subset \reals^3$, which solve the following constrained optimization problem with the additional parameters $\{\hat \alpha_{ij}\}_{ij\in E} \subset \mathbb{R}$:
\begin{equation}\label{eq:LUD}
(\{\hat{\bt}_i\}_{i=1}^n,\{\hat{\alpha}_{ij}\}_{ij\in E}\!)=\!\!\argmin_{\latop{\{\bt_i\}_{i=1}^n \subset \R^3}{\{\alpha_{ij}\}_{ij \in E} \subset \R}}\!\sum\limits_{ij\in E}\|\bt_i-\bt_j-\alpha_{ij}\bga_{ij}\| \text{  s.t. } \alpha_{ij}\geq 1\text{ and } \sum_i \bt_i=\b0.
\end{equation}
This formulation is very similar to that of CLS, but uses least absolute deviations instead of least squares in order to gain robustness to outliers. Numerical results in \cite{cvprOzyesilS15} demonstrate that LUD can exactly recover the original locations even when some pairwise directions are maliciously corrupted.

Following \"{O}zyesil and Singer, Hand, Lee and Voroninski \cite{HandLV15} proposed the ShapeFit algorithm as a theoretically guaranteed solver.  Given the pairwise directions $\{\bga_{ij}\}_{ij\in E}$, the ShapeFit algorithm estimates the locations $\{\bt_i^*\}_{i=1}^n$ by solving the following convex optimization problem:
\[
\min_{\{\bt_i\}_{i=1}^n \subset \R^3} \sum_{ij\in E}\|P_{\bga_{ij}^\perp}(\bt_i-\bt_j)\| \text{  s.t. } \sum_{ij\in E}\langle \bt_i-\bt_j,\bga_{ij}\rangle=1 \text{ and } \sum_{i=1}^n \bt_i=\b0,
\]
where $P_{\bga_{ij}^\perp}$ denotes the orthogonal projection onto the orthogonal complement of $\bga_{ij}$.

Empirically, for low levels of noise and corruption, ShapeFit is more accurate than LUD. Figure \ref{fig:simu} demonstrates the empirical behavior of ShapeFit and LUD for synthetic data. We remark that in this case of synthetic data, stability can be measured as the magnitude of the rate of change of accuracy with respect to corruption or noise. Figures 1 and 2 of Goldstein et al.~\cite{GoldsteinHLVS16_shapekick} demonstrate similar behavior, but emphasize exact recovery at lower corruption levels, where ShapeFit often outperforms LUD. Practical results are demonstrated in~\cite{GoldsteinHLVS16_shapekick, SenguptaAGGJSB17,AAB} and seem to indicate similar behavior.
Most notably, LUD is more stable, where stability for real data sets is demonstrated by consistent performance of different simulations for the same data set as well as consistent performance among different data sets.

We are unaware of any careful explanation of the differences between the performance of LUD and ShapeFit, which are demonstrated in Figure \ref{fig:simu}.
To address this issue, we note that the LUD constraints are $\alpha_{ij}\geq 1$ for all $ij\in E$, where each $\alpha_{ij}$ is a relaxation of $\|\bt_i-\bt_j\|$. These constraints force the nearby locations to be sufficiently separated. In other words, short edges are extended to prevent collapsed solutions. In contrast, since the constraint $\sum_{ij\in E}\langle \bt_i-\bt_j,\bga_{ij}\rangle=1$ of ShapeFit only fixes the global scale instead of restricting the length of each edge, it cannot avoid collapse of the whole graph into several clusters. Therefore, under high levels of corruption and noise, where a possible collapse is a major concern, LUD is more accurate and stable. However, under low levels of corruption and noise, the extension of short edges mentioned above may deform the solution of LUD and result in inaccurate estimation. We remark that similarly to the extension of short edges, \cite{BATA} discusses the shrinkage of long edges by LUD. However, \cite{BATA}, which only experiments with low levels of corruption, wrongly claims that ShapeFit is generally superior to LUD.

Some recent works seek to further improve or utilize LUD and ShapeFit. Goldstein et al.~\cite{GoldsteinHLVS16_shapekick}  presents an accelerated version of ShapeFit using ADMM. However, it sacrifices accuracy for speed. Sengupta et al.~\cite{SenguptaAGGJSB17}  presents a novel heuristic for estimating the fundamental matrices with rank constraints, which directly relies on LUD. Zhuang et al.~\cite{BATA} proposed an angle-based formulation to address the unreasonable high weights of long-edge terms in LUD and ShapeFit. However, both~\cite{SenguptaAGGJSB17} and~\cite{BATA} rely on good initializations and lack recovery and convergence guarantees. Other works seek to detect and remove corrupted pairwise directions as a preprocessing step for common camera location solvers, in particular, for LUD and ShapeFit. Wilson and Snavely~\cite{1dsfm} proposed the 1DSfM algorithm for identifying outlying pairwise directions. It projects the 3D locations and pairwise directions to 1D and solves an ordering problem using a heuristic method. However, this method suffers from convergence to local minima. Furthermore, the projection to 1D loses information. Shi and Lerman~\cite{AAB} proposed the All-About-that-Base (AAB) algorithm for separating corrupted and uncorrupted pairwise directions. They established a near-perfect separation guarantee for a basic version of this algorithm. They demonstrated state-of-the-art numerical results, where the most competitive procedure in their real data experiments was LUD preprocessed by AAB.

\begin{figure}[h!]
	\centering
	\subfloat{\includegraphics[width=.45\textwidth]{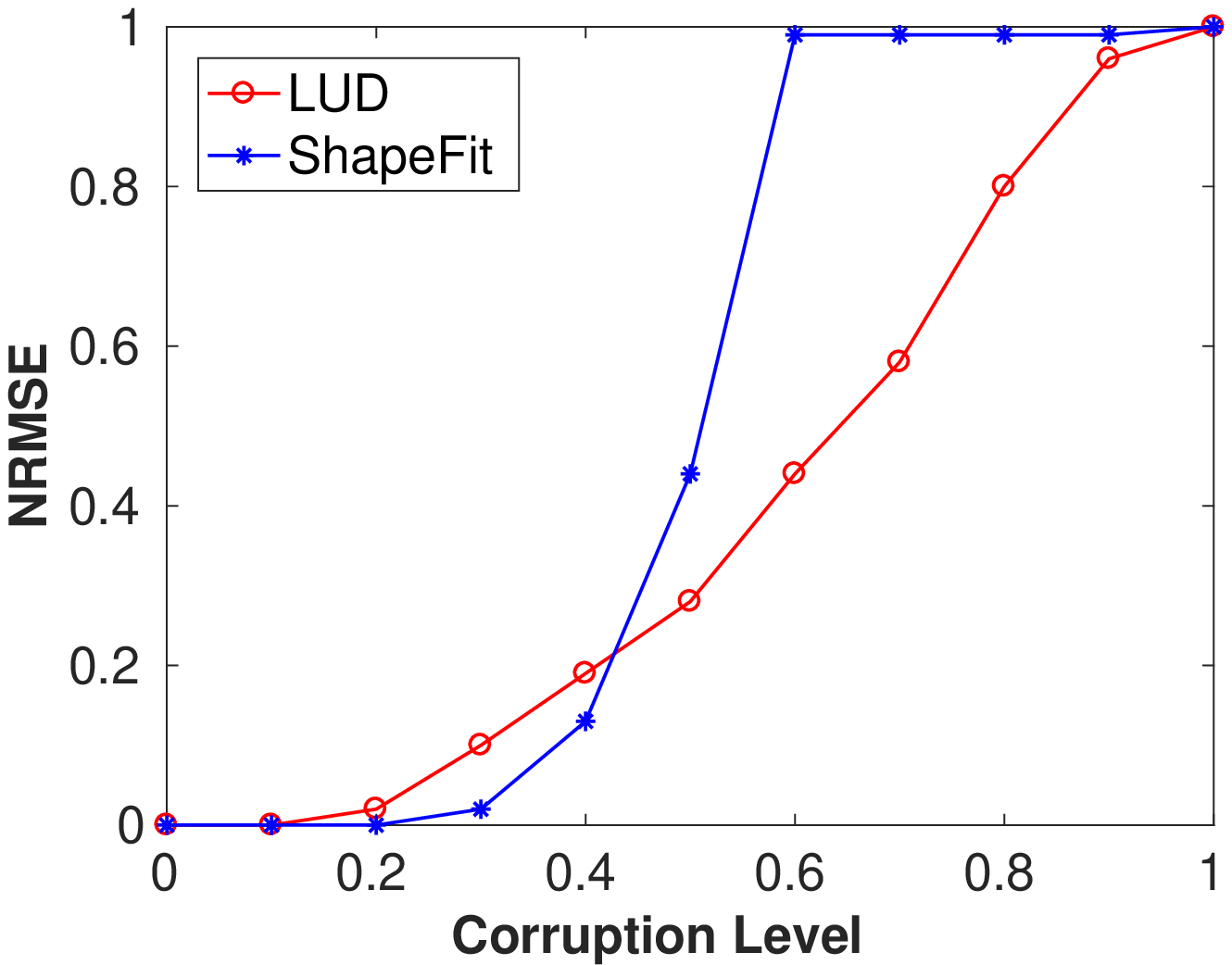}}
	\subfloat{\includegraphics[width=.45\textwidth]{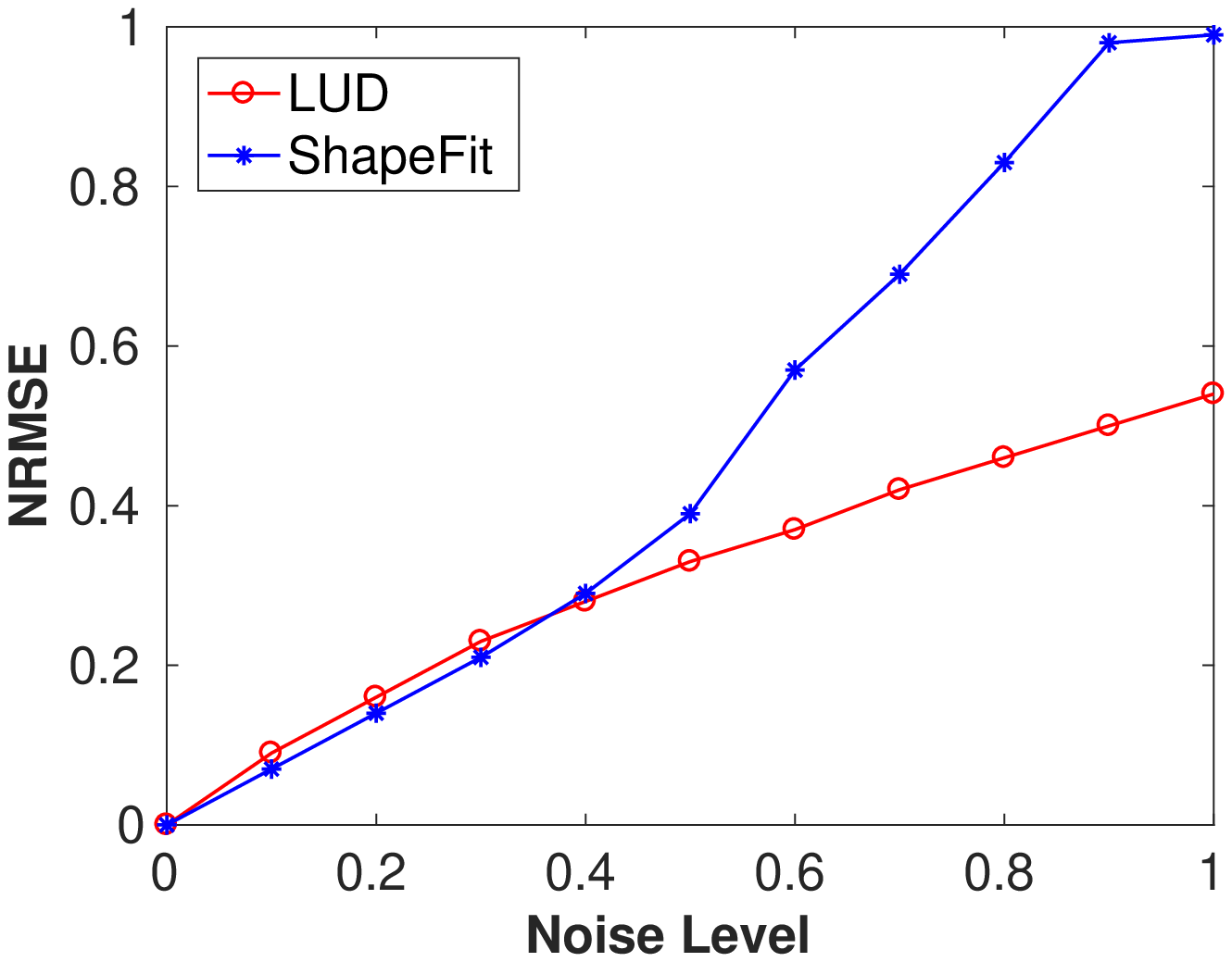}}
	\caption{{Empirical performance of LUD and ShapeFit under corruption and noise for synthetic data. Both methods are implemented using the CVX-SDPT3 package. Left: Data is generated by the HLV model with $n=50$ and $p=0.5$. The corruption level is measured by $|\Eb|/|E|$ instead of $\epsilon_b$ and takes values in $[0,1]$. Right: The ground truth is generated by the HLV model with $n=50$, $p=0.5$ and $\Eb = \emptyset$. For each $ij\in E$, $\bga_{ij}=(\bga_{ij}^*+\sigma \bv_{ij})/\| \bga_{ij}^*+\sigma \bv_{ij} \|$, where $\bv_{ij}$ is uniformly distributed on $S^2$ and $0 \leq \sigma \leq 1$ is the noise level.
In both figures the performance is measured by the normalized root mean squared error (NRMSE):
$\text{NRMSE}^2= {{\sum_{i=1}^n \|\kappa^*\hat \bt_i-\bt_i^*\|^2}}/{\sum_{i=1}^n\|\bt_i^*\|^2}$, where $\kappa^*=\argmin_{\kappa\in\mathbb{R}}\sum_{ij\in E} \|\kappa\hat\bt_i-\bt_i^*\|^2$.}
\label{fig:simu}}
\end{figure}

The mathematical problem discussed in this paper is an example of a convex recovery problem. Other such problems include, for example, recovering sparse signals, low-dimensional signals and underlying subspaces.
There seem to be two different kinds of theoretical guarantees for convex recovery problems. Guarantees of the first kind construct dual certificates~\cite{CandesLMW11_robustpca,CandesT05_decode,ChandrasekaranSPW11}. Guarantees of the second kind show that the underlying object is the minimizer of the convex objective function, and it is sufficient to show this in a small local neighborhood~\cite{CoudronL12_reaper,LMTZ2014,ravikumar2011,XuCS12_robustpca,ZhangL14_novel}.
The latter guarantees often require geometric methods.
It is evident from page 33 of \cite{HandLV15} that the guarantees of ShapeFit are of the second kind. Nevertheless, the graph-theoretic approach of \cite{HandLV15} is completely innovative and enlightening. In particular, it clarifies the effect of vertex perturbation on edge deformation.

\subsection{This Work}
This paper proves exact recovery of LUD under the HLV model up to ambiguous scale and translation.
More precisely, it establishes the following theorem.
\begin{customthm}{1}\label{thm:main}
There exist absolute constants $n_0$,  $C_0$ and $C_1$ such that for $n > n_0$ and for $\{\bt_i^*\}_{i=1}^n\subseteq \mathbb{R}^3$, $E \subseteq [n] \times [n]$ and $\{\bga_{ij}\}_{ij\in E}\subseteq\mathbb{R}^3$ generated by the HLV model with parameters $n$, $p$ and $\epsilon_b$ satisfying $C_0 n^{-1/3}\log^{1/3} n\leq p\leq 1$ and $\epsilon_b\leq C_1p^{7/3}/\log^{9/2} n$, LUD recovers $\{\bt_i^*\}_{i=1}^n$ up to translation and scale with probability at least $1-1/n^4$.
\end{customthm}
To the best of our knowledge this theorem is the first exact recovery result for LUD under a corrupted model. Theorem 1.2 of Hand, Lee and Voroninski \cite{HandLV15} provides exact recovery for ShapeFit under the same model. Both theorems restrict the minimal value of $p$ and the maximal degree of corruption $\epsilon_b$.
Typically, Theorem~\ref{thm:main} tolerates more corruption. Indeed, the higher the upper bound on $\epsilon_b$, the higher the corruption that the algorithm can tolerate. Theorem 1.2 of \cite{HandLV15} requires a bound of order $O(p^5/\log^3 n)$ and Theorem \ref{thm:main} requires a bound of order $O(p^{7/3}/\log^{9/2}n)$.
Therefore in sparse settings where $p \ll 1$, e.g., $p \approx n^{-\alpha}$, Theorem \ref{thm:main} guarantees recovery with more corruption than Theorem 1.2 of \cite{HandLV15}.

There are two additional differences between the theorems, which we find minor. First, in Theorem 1.2 of \cite{HandLV15} the lower bound on $p$ is of order $n^{-1/2}\log^{1/2} n$. While our lower bound is of order
$n^{-1/3}\log^{1/3} n$, it can be modified to be of order $n^{\delta-1/2}\log^{1/2-\delta} n$ for any positive $\delta$ sufficiently small, however, the multiplying constant, $C_0$ depends on $\delta$ and explodes as $\delta$ approaches zero.  The second difference is that  Theorem 1.2 of \cite{HandLV15} was extended to Euclidean spaces with sufficiently high dimensions  (see Theorem 1.1 of \cite{HandLV15}). We can easily extend Theorem~\ref{thm:main} to any fixed higher dimension, though we are not sure about the case where both the dimension and number of locations increase to infinity. Nevertheless, we would rather focus on the three-dimensional case because of the motivating problem from computer vision.

We remark that our analysis borrows various ideas from the work of Hand, Lee and Voroninski \cite{HandLV15}. In fact, we find it interesting to show that their innovative and nontrivial ideas are not limited to a specific objective function, but can be extended to another one.

The main ideas of the proof of Theorem~\ref{thm:main} are discussed in Section~\ref{sec:proof_main}, while additional technical details are left to other sections.
The novelties of this work are emphasized in Section~\ref{sec:nov}.

\section{Proof of Theorem~\ref{thm:main}}
\label{sec:proof_main}
\begin{figure}[h!]
	\centering
	\includegraphics[width=1\textwidth]{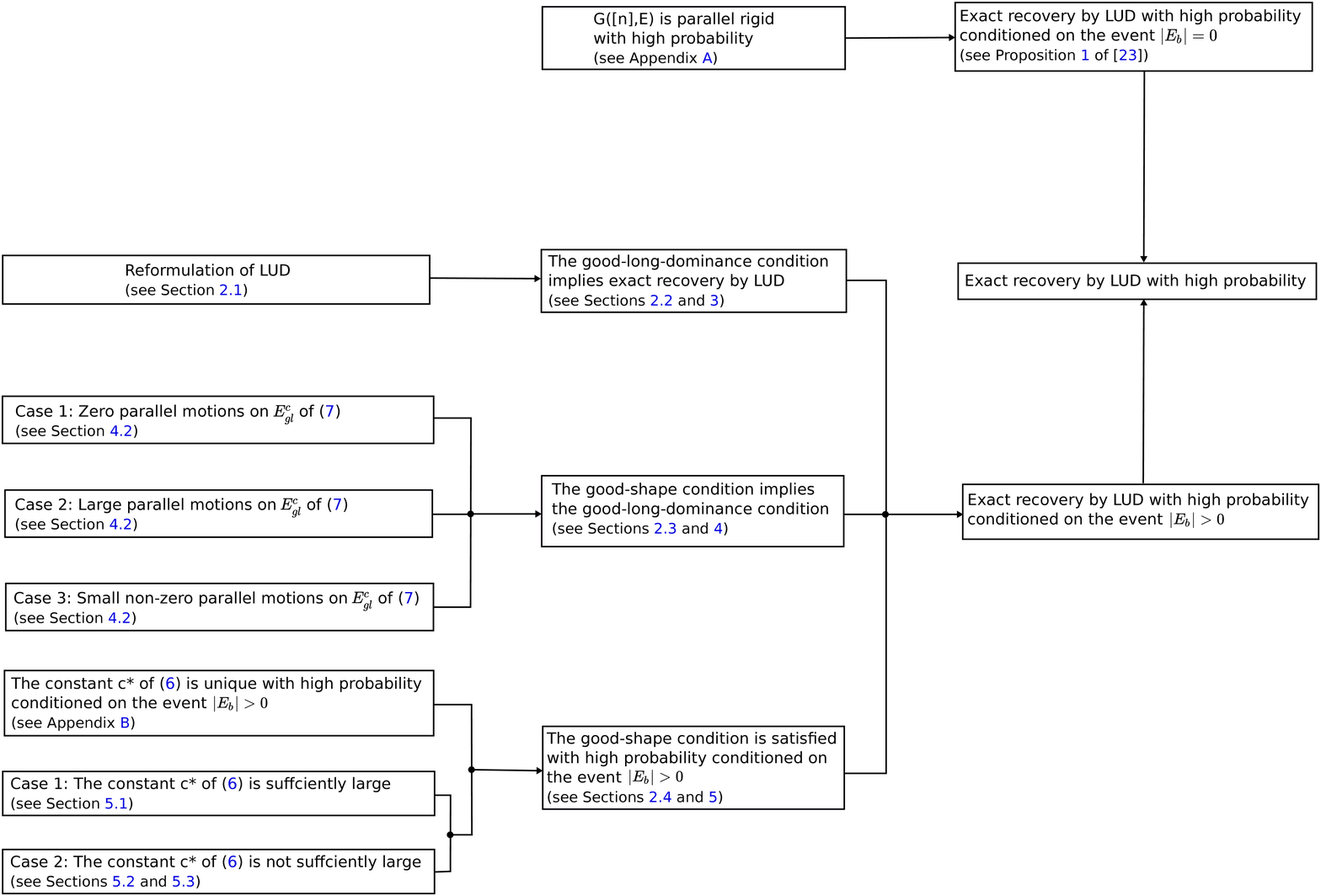}
	\caption{Roadmap for the proof of Theorem~\ref{thm:main}.}
\label{fig:map}
\end{figure}
Figure~\ref{fig:map} presents a roadmap for the proof of Theorem~\ref{thm:main}. The organization of the paper can be described according to a more simplistic version of this roadmap.
Section \ref{sec:oracle} reformulates the LUD problem. Section \ref{sec:goodlong} uses the new formulation to define the ``good-long-dominance condition'' and states that under this condition LUD exactly recovers $\{\bt_i^*\}_{i=1}^n$. Section \ref{sec:goodshape} defines the ``good-shape condition'' and claims that it implies the good-long-dominance condition. Section \ref{sec:ver} shows that under the HLV model the good-shape condition is satisfied with high probability and thus concludes the proof of the theorem. At last, Section~\ref{sec:nov} discusses the novelties in our proof. Details of proofs of the main results of this section are left to Sections \ref{sec:proof1}-\ref{sec:proof3} and the Appendix.

We make the above description more precise so it reflects the roadmap of Figure~\ref{fig:map}.
Our proof of Theorem~\ref{thm:main} assumes that $|\Eb|>0$, where $|E_b|$ denotes the number of elements in $E_b$.
Under the setting of Theorem~\ref{thm:main}, this assumption is sufficient to conclude the theorem. Indeed, Proposition 1 of \cite{cvprOzyesilS15} implies that if $|\Eb| = 0$ and the underlying graph is parallel rigid, then LUD recovers the true solution $\{\bt_i^*\}_{i=1}^n$ up to translation and scale. Appendix~\ref{sec:parallel} reviews this notion of parallel rigidity and shows that under the setting of Theorem \ref{thm:main}, the generated graph is parallel rigid with high probability. Consequently, exact recovery by LUD occurs with high probability when $|\Eb|=0$ and thus it is sufficient to study the case where $|\Eb|>0$.

A technical notion that is crucial in understanding the roadmap is the scale $c^*$ obtained by LUD with respect to the ground truth solution. 
More precisely, when LUD recovers the ground truth locations $\{\bt_i^*\}_{i=1}^n$, it outputs the scaled and shifted locations $\{c^*\bt_i^*+\bt_s\}_{i=1}^n$.
The constant $c^*$ is used to define the notion of good and long edges, 
which is further used to define the above mentioned notions of good-long-dominance and good-shape conditions. To make these notions well-defined, $c^*$ has to be unique. Appendix \ref{sec:cstar} shows that under the setting of Theorem \ref{thm:main} and the sufficient assumption $|\Eb| > 0$, $c^*$ is unique with high probability.
The three and two cases specified in the left hand side of Figure~\ref{fig:map}, which use the constant  $c^*$ and the set of good and long edges, $\Egl$, will be later clarified in Sections \ref{sec:proof1}-\ref{sec:proof3}.

In Sections \ref{sec:goodlong}, \ref{sec:goodshape}, \ref{sec:proof1}, \ref{sec:proof2} and part of Appendix \ref{sec:cstar}, the setting is deterministic.
It assumes a graph $G([n],E)$ with distinct ground truth locations $\{\bt_i^*\}_{i=1}^n$. It also assumes that $E$ is partitioned into $E_b$ and $E_g$. For $ij \in E_g$, the pairwise direction $\bga_{ij}$ is $\bga_{ij}^*$ of \eqref{eq:gammastar} and for $ij \in E_b$, $\bga_{ij}$ is arbitrarily assigned. Except for Appendix \ref{sec:cstar}, this deterministic setting also assumes that $c^*$ is unique. We remark that the latter requirement or other requirements in these sections and appendix, such as the good-long-dominance condition, good-shape condition or non-self-consistency, may restrict the topology of $G([n],E)$, the vertex locations and the corrupted edges.

Throughout the paper we pursue the following conventions and assumptions.
For brevity, we say that an event in our setting holds with overwhelming probability if its probability is at least $1-e^{-C n^{\alpha}}$ for some $\alpha$, $C>0$.
We remark that while the paper has many probabilistic estimates, $p$ is reserved for the connection probability of the HLV model. We often refer to ``locations $\{\bt_i\}_{i=1}^n$'', even though $\{\bt_i\}_{i=1}^n$ is the set of locations. Similarly, we write ``pairwise directions $\{\bga_{ij}\}_{ij\in E}$''. We sometimes refer to the set of vertex locations by $T$. 
Whenever we talk about ground truth camera locations, we assume they are distinct even if we do not specify this. We denote vectors by boldface lower-case letters and matrices by boldface upper-case letters.


\subsection{Reformulation of the Problem}\label{sec:oracle}
We suggest an equivalent formulation of the LUD optimization problem, which gets rid of the variables $\{\alpha_{ij}\}_{ij\in E}$.
We express the optimal $\alpha_{ij}$ in terms of $\{\hat\bt_i\}_{i=1}^n$ and $\{\bga_{ij}\}_{ij\in E}$ as follows:
\begin{equation}\label{eq:alph}
\hat\alpha_{ij} = \argmin\limits_{\alpha_{ij}\geq 1}\|\hat\bt_i-\hat\bt_j-\alpha_{ij}\bga_{ij}\|.
\end{equation}
Figures \ref{fig:1} and \ref{fig:2} illustrate the value of $\hat\alpha_{ij}$ in two complimentary cases. Note that in both figures, $\hat\alpha_{ij}$ is obtained by minimizing the length of the dashed line.
These figures thus demonstrate the following equivalent expression for $\{\hat\alpha_{ij}\}_{ij\in E}$:
\begin{align*}
\hat\alpha_{ij}=\begin{cases}
 \| P_{\bga_{ij}}(\hat \bt_i-\hat \bt_j)\|, & \text{if }  \langle \bga_{ij}\,, \hat \bt_i-\hat \bt_j\rangle>1;\\
1,&  \text{if }  \langle \bga_{ij}\,, \hat \bt_i-\hat \bt_j\rangle\leq 1,
\end{cases}
\end{align*}
where $P_{\bga_{ij}}$ denotes the orthogonal projection onto $\bga_{ij}$.
\begin{figure}
 \centering
\begin{minipage}[]{0.47\textwidth}
\begin{tikzpicture}[scale=3.5,line width=1pt]
 \coordinate (1) at (0,0);
 \coordinate (2) at (1,0);
 \coordinate (3) at (1.5,0.8);
 \coordinate (4) at (1.5,0);
 \foreach \x in {1,2,...,4}{
     \node[mynode] at (\x) {};
 }
 \draw[->, >=latex] (1) -- (2);
 \draw[->, >=latex] (2) -- (4);
 \draw[->, >=latex] (1) -- (3);
 \draw[thick, dashed] (4) -- (3);
 \node at (0,-0.08) {$0$};
 \node at (1,-0.1) {$\bga_{ij}$};
 \node at (1.5,0.9) {$\hat \bt_i-\hat \bt_j$};
 \node at (1.5,-0.1) {$\hat \alpha_{ij}\bga_{ij}$};
\end{tikzpicture}
\caption{{Demonstration of the choice of $\hat\alpha_{ij}$ when  $ \langle \bga_{ij}\,, \hat \bt_i-\hat \bt_j\rangle>1$.
By definition, $\hat\alpha_{ij}=\| P_{\bga_{ij}}(\hat \bt_i-\hat \bt_j)\|.$ \label{fig:1}}}
\end{minipage}
\hspace{0.5cm}
\begin{minipage}[]{0.47\textwidth}
\begin{tikzpicture}[scale=3.5,line width=1pt]
\coordinate (1) at (0,0);
 \coordinate (2) at (1,0);
 \coordinate (3) at (0.6,0.8);
 \foreach \x in {1,2,...,3}{
     \node[mynode] at (\x) {};
 }
 \draw[->, >=latex] (1) -- (2);
 \draw[->, >=latex] (1) -- (3);
 \draw[thick, dashed] (2) -- (3);
 \node at (0,-0.08) {$0$};
 \node at (1,-0.1) {$\bga_{ij}=\hat \alpha_{ij}\bga_{ij}$};
 \node at (0.6,0.9) {$\hat \bt_i-\hat \bt_j$};
\end{tikzpicture}
\caption{Demonstration of the choice of $\hat\alpha_{ij}$ when $ \langle \bga_{ij}\,, \hat \bt_i-\hat \bt_j\rangle\leq 1$. By the constraint $\hat\alpha_{ij}\geq 1$, $\hat\alpha_{ij}=1.$}\label{fig:2}
\end{minipage}
\end{figure}
Plugging the above optimal values of $\{\hat\alpha_{ij}\}_{ij\in E}$ into \eqref{eq:LUD}, we obtain an equivalent LUD formulation:
\begin{equation}\label{eq:LUD1}
\{\hat{\bt}_i\}_{i=1}^n=\argmin_{\{\bt_i\}_{i=1}^n \subset \mathbb{R}^3} \sum\limits_{ij\in E}f_{ij}(\bt_i\,,\bt_j)\,\,\,\text{ subject to } \sum_{i=1}^n \bt_i=\b0,
\end{equation}
where
\begin{align}
\label{eq:def_f}
f_{ij}(\bt_i,\bt_j)=\begin{cases}
 \| P_{\bga_{ij}^{\perp}}( \bt_i- \bt_j)\|, & \text{if }  \langle \bga_{ij}\,, \hat \bt_i-\hat \bt_j\rangle>1;\\
\| \bt_i- \bt_j-\bga_{ij}\|, & \text{if }  \langle \bga_{ij}\,, \hat \bt_i-\hat \bt_j\rangle\leq 1.
\end{cases}
\end{align}

Our analysis requires formulating an oracle problem that determines the particular shift and scale found by LUD. That is, we assume we know the ground truth solution $\{\bt_i^*\}_{i=1}^n$ and we ask for the scale $c^*$ and shift $\bt_s$ such that $\{c^*\bt_i^*+\bt_s\}_{i=1}^n$ minimizes the LUD problem. This oracle problem is formulated as follows:
\begin{align}\label{eq:oracle}
(c^*,\bt_s)=\argmin_{c\in\mathbb{R}, \bt\in \mathbb{R}^3}\sum\limits_{ij\in E}f_{ij}(\bt_i\,,\bt_j)\,\,\,\text{ subject to } \sum_{i=1}^n \bt_i=\b0 \text{ and } \bt_i=c\bt_i^*+\bt.
\end{align}
We later show in Appendix \ref{sec:cstar} that $c^*$ is unique with overwhelming probability under the setting of Theorem \ref{thm:main} and our assumption that $\Eb \neq \emptyset$. 
The uniqueness of $\bt_s$ follows from the LUD constraint $\sum_i\bt_i=\b0$.
We will prove Theorem \ref{thm:main} by showing that $\hat\bt_i=c^*\bt_i^*+\bt_s$ for all $i\in [n]$.

\subsection{Exact Recovery under the Good-Long-Dominance Condition}\label{sec:goodlong}
We establish the recovery of the ground truth locations $\{\bt_i^*\}_{i=1}^n$ by LUD up to translation and scale under a geometric condition, which we refer to as the good-long-dominance condition. The set of good and long edges, $\Egl$, and its complement are defined by
\begin{equation}\label{eq:E1}
\Egl=\{ij\in \Eg|\text{ }\|\bt_i^*-\bt_j^*\|>1/c^*\} \text{ and }\Egl^c=E\setminus \Egl.
\end{equation}
The sets $\Egl$ and $\Egl^c$ are well-defined if $c^*$ uniquely solves \eqref{eq:oracle}. As explained above, in this and the next section (as well as when providing supplementary details in Sections \ref{sec:proof1} and \ref{sec:proof2}), we assume a ``deterministic setting'', where $c^*$ is unique. On the other hand, when assuming the setting of Theorem \ref{thm:main} and the sufficient condition $|E_b|>0$, $c^*$  is unique with overwhelming probability.

\begin{definition}[Good-Long-Dominance Condition]\label{def:goodlong}
We say that $\{\bt_i^*\}_{i=1}^n$, $E = \Eg \cup \Eb \subseteq[n]\times[n]$ and $\{\bga_{ij}\}_{ij\in E}$  satisfy the good-long-dominance condition if for any perturbation vectors $\{\beps_i\}_{i=1}^n\in\mathbb{R}^3$ such that $ \sum_{i=1}^n \beps_i=\b0$ and $ \sum_{i=1}^n \langle\beps_i, \bt_i^*\rangle =0$,
\begin{align}\label{eq:deterministic0}
\sum_{ij\in \Egl}\|P_{\bga_{ij}^{* \perp}}(\beps_i-\beps_j)\|\geq \sum_{ij\in \Egl^c}\|\beps_i-\beps_j \|.
\end{align}
\end{definition}
In order to clarify this condition, we assume that the variables $\{\bt_i\}_{i=1}^n$ are perturbed by $\{\beps_i\}_{1=1}^n$ respectively from the ground truth $\{c^*\bt_i^*+\bt_s\}_{i=1}^n$. As explained later in \eqref{eq:caseA}, the change in the objective function of \eqref{eq:LUD1}, when restricted to the sum over $\Egl$, is the LHS of \eqref{eq:deterministic0}. Furthermore, as explained later in \eqref{eq:caseB}, the change in the objective function of \eqref{eq:LUD1}, when restricted to $\Egl^c$, is bounded above by the RHS of \eqref{eq:deterministic0}. The condition thus shows that the change in the objective function due to the good and long edges dominates the change due to all other edges.

At last, we formulate the following theorem, which is proved in Section \ref{sec:proof1}.
\begin{customthm}{2}\label{thm:deterministic0}
If $\{\bt_i^*\}_{i=1}^n$, $E = \Eg \cup \Eb \subseteq[n]\times[n]$ and $\{\bga_{ij}\}_{ij\in E}$ satisfy the good-long-dominance condition, then LUD exactly recovers the ground truth solution up to translation and scale. That is, the solution 
of \eqref{eq:LUD1} has the form $\hat\bt_i=c^*\bt_i^*+\bt_s$ for $i\in [n]$, where $c^*$ and $\bt_s$ solve \eqref{eq:oracle}.
\end{customthm}
\subsection{Exact Recovery under the Good-Shape Condition}\label{sec:goodshape}
We show that the good-long-dominance condition is satisfied when the graph $E$ has certain properties. We first review the definitions of the following two properties suggested in~\cite{HandLV15}: a $p$-typical graph and $c$-well distributed vertices.
\begin{definition}
A graph $G([n], E)$ is $p$-typical if it satisfies the following propositions:\\
1. $G$ is connected.\\
2. Each vertex of $G$ has degree between $\frac{1}{2}np$ and $2np$.\\
3. Each pair of vertices has codegree between $\frac{1}{2}np^2$ and $2np^2$, where the codegree of a pair of
vertices $ij$ is defined as $|\{k\in [n] : ik, jk \in E\}|$.
\end{definition}
\begin{definition}
Let $G=G([n],E)$ be a graph and let $T = \{\bt_i\}_{i=1}^n\subseteq \mathbb{R}^3$ be a set of vertex locations. For $\bx$, $\by\in \mathbb{R}^3$, $c>0$ and $A\subseteq T$, we say that $A$ is $c$-well-distributed
with respect to $(\bx, \by)$ if the following holds for any $\bh \in \mathbb{R}^3$:
\[\frac{1}{|A|}\sum_{t\in A} \|P_{\spann\{\bt-\bx,\bt-\by\}^\perp}(\bh)\| \geq c · \|P_{(\bx-\by)^\perp}(\bh)\|.\]
We say that $T$   is $c$-well-distributed along $G$ if for all distinct $1\leq i, j \leq n$, the set $S_{ij} = \{\bt_k \in T: ik, jk \in E(G)\}$ is $c$-well-distributed with respect to $(\bt_i, \bt_j)$.
\end{definition}

Let $K_n$ denote the complete graph with $n$ vertices and $E(K_n)$
denote the set of edges of $K_n$.

Using the above notation and definitions, we formulate a geometric condition on $\Egl$ and $G([n],E)$ that guarantees exact recovery by LUD.
\begin{definition}[Good-Shape Condition]\label{def:goodshape}
Let  $p$, $\beta$, $\epsilon_0$, $\epsilon_1$, $c_1 \in(0,1]$, $c_0 \geq 1$ and $\Egl$ be the set of good-long edges defined above. We say that $\{\bt_i^*\}_{i=1}^n$, $E = \Eg \cup \Eb \subseteq[n]\times[n]$ and $\{\bga_{ij}\}_{ij\in E}$ satisfy the good-shape condition with the parameters
$p$, $\beta$, $\epsilon_0$, $\epsilon_1$, $c_0$, $c_1$, if the following hold:
\begin{enumerate}
\item \label{cond:gs_p_typical} $G$ is $p$-typical.
\item \label{cond:gs_large_angles}For any distinct $ij\in E(K_n)$, there exists at least $n-\epsilon_1 n$ indices $k\neq i,j$ such that $1-\langle\bga_{ij}^*,\bga_{ik}^*\rangle\geq \beta^2$ and  $1-\langle\bga_{ij}^*,\bga_{jk}^*\rangle\geq \beta^2$.
\item \label{cond:gs_difference_t_bound} For any distinct $ij\in E(K_n)$, $\|\bt_i^*-\bt_j^*\|\leq c_0\mu$, where
\begin{equation}\label{defmu}
\mu=\frac{1}{|E(K_n)|}\sum_{ij\in E(K_n)}\|\bt_i^*-\bt_j^*\|.
\end{equation}
\item \label{cond:gs_max_degree}The maximal degree of $\Egl^c$ is $\epsilon_0 n$.
\item \label{cond:gs_well_dist_G}$T$ is $c_1$-well-distributed along $G$ and along $K_n$.
\item \label{cond:gs_no_collinear} For any distinct $i$, $j$ ,$k \in [n]$, $\bt_i^*$, $\bt_j^*$ and $\bt_k^* \in V$ are not collinear.
\end{enumerate}
\end{definition}
At last, we claim that under the HLV model the good-shape condition with certain restriction on its parameters implies exact recovery. The proof verifies that the good-long-dominance condition holds and then applies Theorem \ref{thm:deterministic0}.
\begin{customthm}{3}\label{thm:deterministic}
If $\{\bt_i^*\}_{i=1}^n$, $E = \Eg \cup \Eb \subseteq[n]\times[n]$ and $\{\bga_{ij}\}_{ij\in E}$ satisfy the good-shape condition with respect to the parameters
$p$, $\beta$, $\epsilon_0$, $\epsilon_1$, $c_1$, $c_0$ and if
\begin{equation}\label{eq:thm3}\epsilon_0\leq \min\left\{\frac{\beta c_1p}{2^{22}c_0^3},\frac{\beta c_1^2 p}{2^{20}c_0}, \frac{c_1p^2}{16}\right\}\,\,\text{ and }\,\,\,\epsilon_1\leq \min\left(\frac{1}{144c_0},\frac{1}{96}\right),\end{equation}
then the solution $\{\hat\bt_i\}_{i=1}^n$ of \eqref{eq:LUD1} has the form $\hat\bt_i=c^*\bt_i^*+\bt_s$ for $i\in [n]$, where $c^*$ and $\bt_s$ solve \eqref{eq:oracle}.
\end{customthm}

\subsection{Conclusion of Theorem \ref{thm:main}}\label{sec:ver}
We verify that under the HLV model the good-shape condition holds with parameters satisfying~\eqref{eq:thm3} and with high probability. Combining this observation with Theorem~\ref{thm:deterministic} results in Theorem~\ref{thm:main}.

We assume the conditions of Theorem~\ref{thm:main} and set the following parameters
 \[\beta=\frac{p}{2^{18} \log n} , \ c_1=\frac{c}{\sqrt{\log n}}  , \ \epsilon_1=\frac{p}{192 c_0} \text{ and } c_0=64\sqrt{\log n},\]
where $c$ is a constant used in Lemma 3.10 of \cite{HandLV15}.
The second inequality of \eqref{eq:thm3} is clearly satisfied with these parameters. We note that establishing the first inequality
of \eqref{eq:thm3} requires establishing the inequality $\epsilon_0 \leq c' p^2/\log^3 n$, where $c'$ linearly depends on $c$,  that is,  $\epsilon_0 = O(p^2/\log^3 n)$.
The following theorem, which is proved in Section \ref{sec:proof3}, establishes this under the assumptions of Theorem \ref{thm:main}.
\begin{customthm}{4}\label{thm:corruptions}
If the camera locations $\{\bt_i^*\}_{i=1}^n$ and pairwise directions $\{\bga_{ij}\}_{ij\in E}$ are generated by the HLV model with $p=\Omega(\sqrt[3]{\log n/n})$ and $\epsilon_b=O(p^{7/3}/\log^{9/2} n)$, then
\begin{equation}\label{eq:thm4eps}\epsilon_0 = O\left(p^2/\log^3 n\right)  \ \text{ w.p. } 1-O(n^{-5}).
\end{equation}
\end{customthm}

At last, we note that Lemma 3.7 of \cite{HandLV15} and the assumption of Theorem~\ref{thm:main} that $p=\Omega(\sqrt[3]{\log n/n})$\footnote{Recall that for $a$, $b \in \mathbb{R}$, the notation $a= \Omega(b)$ is equivalent with $b=O(a)$.} imply property \ref{cond:gs_p_typical} of Definition \ref{def:goodshape} with probability larger than $1-O(n^{-5})$. Lemma 3.10 of \cite{HandLV15} and the assumption of Theorem~\ref{thm:main} that $p=\Omega(\sqrt[3]{\log n/n})$ imply properties \ref{cond:gs_large_angles}, \ref{cond:gs_difference_t_bound} and \ref{cond:gs_well_dist_G} of Definition \ref{def:goodshape} with probability $1-O(n^{-5})$ and with the above choice of parameters. Property \ref{cond:gs_max_degree} of Definition \ref{def:goodshape} is just the definition of $\epsilon_0$, where the size of $\epsilon_0$ was estimated in Theorem~\ref{thm:corruptions}.
Furthermore, property \ref{cond:gs_no_collinear} of Definition \ref{def:goodshape} holds almost surely since the vertices are generated by i.i.d.~Gaussian distributions.

We have shown that all properties of the good-shape condition  and \eqref{eq:thm3} hold with probability $1-O(n^{-5})$, which can be written as $1-n^{-4}$ for sufficiently large $n$. This concludes the proof of Theorem~\ref{thm:main}.

We remark that the bound on $\epsilon_b$ in Theorem \ref{thm:main} is chosen so that \eqref{eq:thm4eps} and the first inequality of \eqref{eq:thm3} hold. Note that the lower bound on $p$ in Theorem \ref{thm:main} is sufficient for Theorem \ref{thm:corruptions}. As mentioned earlier, this lower bound can be modified to be of order $n^{\delta-1/2}\log^{1/2-\delta} n$ for any positive $\delta$ sufficiently small.

\subsection{Novelties of This Paper}\label{sec:nov}

This work uses ideas and techniques of \cite{HandLV15}, but considers LUD instead of ShapeFit and guarantees a stronger rate of corruption. Here
we highlight the main technical differences between the two works and emphasize the novel arguments for handling these differences in the current work.

\textbf{Reformulation:} 
The objective function of ShapeFit depends only on $\{\bt_i\}_{i=1}^n$, while the objective function of LUD has the additional variables $\{\alpha_{ij}\}_{ij\in E}$, which introduce more degrees of freedom.
To handle this issue, we reformulated the LUD problem in \eqref{eq:LUD1} as an equivalent convex optimization problem with objective function depending only on $\{\bt_i\}_{i=1}^n$. We also needed to introduce the oracle problem \eqref{eq:oracle} that provided the scale and shift of LUD with respect to the ground truth.
 Furthermore, we needed to guarantee uniqueness of the oracle scale, $c^*$, with overwhelming probability. The latter guarantee is restricted to the corrupted case and thus required us to guarantee parallel rigidity with overwhelming probability in the uncorrupted case.

\textbf{Adaptation to the new formulation:} The reformulated objective function for LUD is different than that of ShapeFit only in the case where $\langle \bga_{ij}\,, \hat \bt_i-\hat \bt_j\rangle\leq 1$.
We note that for $ij \in \Egl$, $\langle \bga_{ij}\,, \hat \bt_i-\hat \bt_j\rangle>1$. 
Therefore, for $ij \in \Egl$ the objective functions of ShapeFit and LUD coincide. Our analysis thus tries to follow that of \cite{HandLV15}, while replacing
$\Eg$ and $\Eb$ in \cite{HandLV15} with $\Egl$ and $\Egl^c$ respectively. Some modifications in the analysis of \cite{HandLV15} are needed, in particular, the two mentioned below.

\textbf{More faithful constraint on perturbation:}
Both works introduce constraints on the perturbed solutions $\{c^*\bt_i^* + \bt_s +\beps_i\}_{i=1}^n$. Even though $c^*$ is not defined in  \cite{HandLV15}, it can be defined as the constant satisfying $\sum_{ij\in E}\langle c^*\bt_i^*-c^*\bt_j^*\,,\bga_{ij}\rangle=1$, where the ground truth $\{\bt_i^*\}_{i=1}^n$ is denoted by $\{\bt_i^0\}_{i=1}^n$ in \cite{HandLV15}.
Hand, Lee and Voroninski \cite{HandLV15} require that
\begin{equation}\label{eq:handconst0}
\sum_{ij\in E} \langle \beps_i-\beps_j\,, \bga_{ij}\rangle =0
\end{equation}
so that any perturbed solution $\{\tilde\bt_i\}_{i=1}^n$, where $\tilde\bt_i=c^*\bt_i^*  + \bt_s +\beps_i$ for all $i\in [n]$, satisfy
\begin{equation*}\sum_{ij\in E}\langle \tilde\bt_i-\tilde\bt_j\,,\bga_{ij}\rangle=1. \end{equation*}
The perturbation constraint of our work appears in the formulation of the good-long-dominance condition. That is, the perturbation vectors $\{\beps_i\}_{i=1}^n$ need to satisfy $\sum_{i=1}^n \langle \beps_i\,, \bt_i^*\rangle =0$ and $\sum_{i=1}^n\beps_i=0$. This requirement implies that
\begin{equation}\label{eq:epsconstr}
\sum_{ij\in E(K_n)}\langle\beps_i-\beps_j,\bt_{ij}^*\rangle=0.
\end{equation}
We note that the perturbation constraint in \eqref{eq:epsconstr} replaces $\bga_{ij}$ and $E$ in \eqref{eq:handconst0} with $\bt_{ij}^*=\bt_i^*-\bt_j^*$ and $E(K_n)$ respectively. Any perturbed solution $\{\tilde\bt_i\}_{i=1}^n$ thus needs to satisfy
\begin{equation}\sum_{ij\in E(K_n)}\langle \tilde\bt_i-\tilde\bt_j\,,\bt_{ij}^*\rangle=\sum_{ij\in E(K_n)}\langle c^*\bt_i^*-c^*\bt_j^*\,,\bt_{ij}^*\rangle=c^*\sum_{ij\in E(K_n)}\|\bt_{ij}^*\|^2. \end{equation}
We believe that our perturbation constraint is more faithful to the underlying structure of the problem. First of all, it uses the correct directions $\bt_{ij}^*$ instead of the corrupted ones $\bga_{ij}$. More importantly, it uses $\bt_{ij}^*$ for any pair of locations, even if they are not connected by an edge. The latter property results in improved estimates in comparison to \cite{HandLV15}. For example, our lower bound in \eqref{eq:lowereta3} is tighter than the one in \cite[page 38]{HandLV15}, which is multiplied by $2p^2$ and suffers when $p\ll 1$.

\textbf{Effective way of controlling ${\boldsymbol \epsilon_0}$:} A deterministic upper bound on $\epsilon_b$ was obtained in
page 31 of \cite{HandLV15}, where $\epsilon_b$ is denoted in \cite{HandLV15} by $\epsilon_0$.
A direct analogous bound on the maximal degree of $\Egl^c$, $\epsilon_0$, depends on the unknown scale $c^*$ and is thus not appealing. The proof of Theorem~\ref{thm:corruptions} shows that with high probability $1/c^*$ concentrates around a function of $\epsilon_b$, $n$ and $p$ and consequently $\epsilon_0$ can also be controlled with high probability by a function of $\epsilon_b$, $n$ and $p$, as stated in Theorem~\ref{thm:corruptions}. The proof of this theorem is delicate and does not follow ideas of \cite{HandLV15}.

\section{Proof of Theorem~\ref{thm:deterministic0}}\label{sec:proof1}
We assume WLOG that $\bt_s=\b0$, or equivalently $\sum_{i=1}^n\bt_i^*=\b0$. 
Indeed, the statement of Theorem~\ref{thm:deterministic0}, in particular, the good-long-dominance condition, is independent of any shift of the locations $\{\bt_i^*\}_{i=1}^n$.

Since the objective function in \eqref{eq:LUD1} is convex, in order to prove that $\{c^*\bt_i^*\}_{i=1}^n$ solves~\eqref{eq:LUD1},
it is sufficient to prove that for any sufficiently small perturbations $\{\beps_i\}_{i=1}^n\in\reals^3$ such that $\sum_{i=1}^n\beps_i=\b0$,
\begin{equation}\label{eq:intermediate}
\sum_{ij\in E} f_{ij}(c^*\bt_i^*+\beps_i\,,c^*\bt_j^*+\beps_j)\geq \sum_{ij\in E} f_{ij}(c^*\bt_i^*\,,c^*\bt_j^*).
\end{equation}

We note that there exists $\kappa\in \mathbb{R}$ such that for any $i\in [n]$, $\beps_i$ can be decomposed as
$\beps_i=\beps_i^{\myparallel}+\beps_i^{\perp}$, where ${\beps_i^{\myparallel}}=\kappa\bt_i^*$ and $\sum_{i=1}^n\langle\beps_i^{\perp}\,, \bt_i^*\rangle =0$.
To clarify this, we stack the elements of $\{\beps_i\}_{i=1}^n$, $\{\beps_i^{\myparallel}\}_{i=1}^n$, $\{\beps_i^{\perp}\}_{i=1}^n$, $\{\bt_i^*\}_{i=1}^n$ as columns of the respective matrices $\bSigma$, $\bSigma^{\myparallel}$, $\bSigma^\perp$ and $\bT^*$ so that $\bSigma^{\myparallel}=\kappa \bT^*$, $\bSigma=\bSigma^{\myparallel}+\bSigma^\perp$ and $\langle \bSigma^{\myparallel}\,, \bSigma^\perp\rangle=\trace(\bSigma^{\myparallel \bT} \bSigma^\perp)=0$.
Furthermore, the assumption $\bt_s=\b0$ implies that $\sum_{i=1}^n \beps_i^{\perp} = \sum_{i=1}^n \beps_i = \b0$. Therefore, the perturbations $\{\beps_i^{\perp}\}_{i=1}^n$ satisfy the required assumptions on the perturbations used in the good-long-dominance condition.

Letting $c'=c^*+\kappa$, the relation $\beps_i= \kappa\bt_i^* +\beps_i^{\perp}$ implies that
\begin{equation}
\label{eq:c*_c'}
c^*\bt_i^*+\beps_i=c'\bt_i^*+\beps_i^{\perp} \  \ \text{ for all }  i\in [n].
\end{equation}
Since $\{\beps_i\}_{i=1}^n$ have sufficiently small norms, we may assume that $c'$ is sufficiently close to $c^*$.

Next, we obtain useful estimates in two complementary cases.\\
\noindent \textbf{Case A: ${\boldsymbol{ ij \in \Egl}}$}.
In this case, $\bga_{ij}=(\bt_i^*-\bt_j^*)/\|\bt_i^*-\bt_j^*\|=\bga_{ij}^*$ and  $\| P_{\bga_{ij}}(c^*(\bt_i^*-\bt_j^*))\|>1$. Combining the latter inequality, the fact that the perturbations are arbitrarily small and the proximity of $c'$ to $c^*$ result in $\| P_{\bga_{ij}}(c'(\bt_i^*-\bt_j^*)+\beps_i^{\perp}-\beps_j^{\perp})\|>1$. Applying \eqref{eq:c*_c'}, then the latter inequality and \eqref{eq:def_f}, and at last the assumption $ij \in \Egl$ concludes that
\begin{multline*}
f_{ij}(c^*\bt_i^*+\beps_i\,,c^*\bt_j^*+\beps_j)=f_{ij}(c'\bt_i^*+\beps_i^{\perp}\,,c'\bt_j^*+\beps_j^{\perp})
=\| P_{\bga_{ij}^{\perp}}(c'(\bt_i^*-\bt_j^*)+\beps_i^{\perp}-\beps_j^{\perp})\|
=\| P_{\bga_{ij}^{\perp}}(\beps_i^{\perp}-\beps_j^{\perp})\|.
\end{multline*}
This equation and the observation $f_{ij}(c'\bt_i^*\,,c'\bt_j^*)=0$ imply the inequality:
\begin{equation}\label{eq:caseA}
\sum_{ij\in \Egl}\left(f_{ij}(c^*\bt_i^*+\beps_i\,,c^*\bt_j^*+\beps_j)-f_{ij}(c'\bt_i^*\,,c'\bt_j^*)\right)=\sum_{ij\in \Egl}\| P_{\bga_{ij}^{\perp}}(\beps_i^\perp-\beps_j^\perp)\|.
\end{equation}

\noindent \textbf{Case B: ${\boldsymbol{ ij\in \Egl^c}}$}.
Following the demonstration in Figures \ref{fig:1} and \ref{fig:2}, we note that $f_{ij}(\bt_i,\bt_j)$ is the distance between the following two convex sets:  $\{\alpha \bga_{ij}:\alpha\geq 1\}$ and
the singleton $\{\bt_i-\bt_j\}$. Application of \eqref{eq:c*_c'} and then the triangle inequality for a distance between convex sets of $\mathbb{R}^3$ results in
\begin{multline}
|f_{ij}(c^*\bt_i^*+\beps_i\,,c^*\bt_j^*+\beps_j)-f_{ij}(c'\bt_i^*\,,c'\bt_j^*)|
=|f_{ij}(c'\bt_i^*+\beps_i^{\perp}\,,c'\bt_j^*+\beps_j^{\perp})-f_{ij}(c'\bt_i^*\,,c'\bt_j^*)|\leq \|\beps_i^\perp-\beps_j^\perp\|.\label{eq:caseB}
\end{multline}

At last, we combine the above estimates with the good-long-dominance condition to verify \eqref{eq:intermediate}. We first apply \eqref{eq:caseA}, then the good-long-dominance condition of \eqref{eq:deterministic0} with $\{\beps_i^{\perp}\}_{i=1}^n$ that satisfy its necessary requirements, and at last \eqref{eq:caseB}, and consequently conclude that
\begin{align*}
\sum_{ij\in \Egl}\left(f_{ij}(c^*\bt_i^*+\beps_i\,,c^*\bt_j^*+\beps_j)-f_{ij}(c'\bt_i^*\,,c'\bt_j^*)\right)
\geq 
\sum_{ij\in \Egl^c} \left(f_{ij}(c'\bt_i^*\,,c'\bt_j^*)-f_{ij}(c^*\bt_i^*+\beps_i\,,c^*\bt_j^*+\beps_j)\right).
\end{align*}
By rearranging terms, this equation becomes
\begin{align*}
\sum_{ij\in E}f_{ij}(c^*\bt_i^*+\beps_i\,,c^*\bt_j^*+\beps_j)
\geq &\sum_{ij\in E} f_{ij}(c'\bt_i^*\,,c'\bt_j^*).
\end{align*}
By the definition of $c^*$ in~\eqref{eq:oracle} and the assumption $\bt_s=\b0$, this equation implies \eqref{eq:intermediate} and thus concludes the proof.
\section{Proof of Theorem~\ref{thm:deterministic}}\label{sec:proof2}
We show that under the assumptions of Theorem \ref{thm:deterministic}, the good-shape condition implies the good-long-dominance condition and consequently Theorem \ref{thm:deterministic} follows from Theorem \ref{thm:deterministic0}. Section~\ref{prelim}  reviews notation and auxiliary lemmas, which were borrowed from \cite{HandLV15}. Section \ref{sec:details_proof_3}
presents the details of the proof.

While the outline of the proof in this section resembles the outline of the proof of Theorem 3.4 of~\cite{HandLV15}, there are some nontrivial modifications. A main difference between the proofs appears in the perturbation constraints stated earlier in \eqref{eq:handconst0} and \eqref{eq:epsconstr}. 

\subsection{Preliminaries}\label{prelim}
We first review some notation that we mainly borrowed from \cite{HandLV15}. We denote $\bt_{ij}^*=:\bt_i^*-\bt_j^*$ and for $\{\beps_i\}_{i=1}^n\subseteq\mathbb{R}^3$, we define $\eta_{ij}=\|P_{\bga_{ij}^{*\perp}}(\beps_i-\beps_j)\|$ and $\delta_{ij}\|\bt_{ij}^*\|=\langle \beps_i-\beps_j, \bga_{ij}^*\rangle$. We note that $\beps_i-\beps_j$ is the motion of relative location $\bt_i^*-\bt_j^*$ after perturbing $\bt^*_1, \ldots, \bt^*_n$ respectively by $\beps^*_1, \ldots, \beps^*_n$. Thus for edge $ij$, $\eta_{ij}$
is the component of the motion that is orthogonal to $\bt_i^*-\bt_j^*$ and is referred to as rotational motion. Similarly, for edge $ij$, $\delta_{ij}\|\bt_{ij}^*\|$
is the component of the motion that is parallel to $\bt_i^*-\bt_j^*$ and is referred to as parallel motion.  The function $\eta:$ $E(K_n)\times E(K_n)\to \mathbb{R}$ of \cite{HandLV15} is defined as
\begin{equation}\label{eq:eta}
\eta(ij,kl)=\sum_{\latop{m,n \in\{i,j,k,l\}}{m< n}}\eta_{mn}.
\end{equation}
That is, if $ij$ and $kl$ do not have common elements, then $\eta(ij,kl)=\eta_{ij}+\eta_{kl}+\eta_{ik}+\eta_{il}+\eta_{jk}+\eta_{jl}$. If they have one common element, e.g., $i=k$, then $\eta(ij,kl)=\eta_{ij}+\eta_{il}+\eta_{jl}$.
We modify the definition of $\Eg'$ in~\cite{HandLV15} and define $E'(K_n)$ as follows:
\begin{equation}E'(K_n)=\{ij\in E(K_n): \|\bt_{ij}^*\|\geq \frac{1}{2}\mu\},\end{equation} where $\mu$ was defined in equation \eqref{defmu}.
Let $B(ij)$ denote the set of all $kl\in E(K_n)$ for which there exist distinct $a,b,c\in\{i,j,k,l\}$ satisfying $\{a,b\}\neq \{i,j\}$ and $\sqrt{1-\langle\bga_ac^*\,,\bga_{bc}^*\rangle}<\beta$.

The following lemmas are from \cite{HandLV15}. We remark that Lemma~\ref{thm:lemma5} was formulated in \cite{HandLV15} for $E' = \Eg$ as a matter of convenience, however, its formulation below still hold.
\begin{customlemma}{1}[Lemma 2.6 of \cite{HandLV15} with $\alpha = 1$]\label{thm:lemma3}
Let $K_4$ be the complete graph of 4 vertices with 4 distinct vertex locations $\{\bt_i^*\}_{i=1}^4 \subset \mathbb{R}^3$, and let $\{\beps_i\}_{i=1}^4 \subset \mathbb{R}^3$ be perturbation vectors. Then
\begin{equation}
\eta(12,34)\geq \frac{\beta_0}{4}\|\bt_{12}^*\||\delta_{12}-\delta_{34}|, \ \text{ where } \ \beta_0=\min\limits_{\latop{\{i,j,k\}\in[4]}{\{j,k\}\neq \{1,2\}}} \sqrt{1-\langle \bga_{ij}^*\,,\bga_{ik}^*\rangle}.
\end{equation}
\end{customlemma}
\begin{customlemma}{2}[Lemmas 2.8 and 2.9 of \cite{HandLV15}]\label{thm:lemma5}
Let $G([n],E)$ be $p$-typical and $c_1$-well-distributed graph with $n$ vertices for $0<p$, $c_1 \leq 1$ and let $E'$ be a subset of $E$, where the maximal degree of its complement, $E'^c$, is bounded by $\epsilon' n$. If $\epsilon' \leq c_1 p^2/8$, then
\begin{equation}
\sum_{ij\in E'}\eta_{ij}\geq \frac{c_1 p^2}{8\epsilon'}\sum_{ij\in E'^c}\eta_{ij} \text{ and } \sum_{ij\in E'}\eta_{ij}\geq \frac{c_1 p}{16}\sum_{ij\in E(K_n)}\eta_{ij}.
\end{equation}
\end{customlemma}
Since $K_n$ is $1$-typical, the next corollary follows from the first inequality of Lemma \ref{thm:lemma5}.
\begin{customcorollary}{1}\label{thm:coro1}
Let $K_n$ be $c_1$-well-distributed and let $E'$ be a subset of $E(K_n)$, where the maximal degree of its complement, $E'^c$, is bounded by $\epsilon' n$.
If $\epsilon' \leq c_1/8$, then
\begin{equation}
\sum_{ij\in E'}\eta_{ij}\geq \frac{c_1}{8\epsilon'}\sum_{ij\in E'^c}\eta_{ij}.
\end{equation}
\end{customcorollary}
\begin{customlemma}{3}[Lemma 3.6 of \cite{HandLV15}]\label{thm:lemma14}
For any $ij\in E(K_n)$,
\begin{equation}
|B(ij)|\leq 6\epsilon_1n^2,
\end{equation}
where $\epsilon_1$ is the constant specified in property~\ref{cond:gs_large_angles} of Definition~\ref{def:goodshape}.
\end{customlemma}

\subsection{Details of Proof} \label{sec:details_proof_3}
 In order to verify the good-long-dominance condition of
\eqref{eq:deterministic0}, it is sufficient to prove that the total rotational motion on  $\Egl$ is greater than or equal to    two times the total parallel motion on  $\Egl^c$. That is,
 \begin{equation}\label{eq:sufficient}\sum_{ij\in \Egl}\eta_{ij}\geq  2\sum_{ij\in \Egl^c}|\delta_{ij}|\|\bt_{ij}^*\|.\end{equation}
Indeed, since $\epsilon_0\leq c_1p^2/16$ we can apply the first inequality of Lemma \ref{thm:lemma5} and obtain that $$\sum_{ij\in \Egl}\eta_{ij} \geq 2\sum_{ij\in \Egl^c}\eta_{ij}.$$ The combination of the latter inequality with \eqref{eq:sufficient} and the triangle inequality $\|\beps_i-\beps_j\|\leq |\delta_{ij}| \|\bt_{ij}^*\|+\eta_{ij}$ yields \eqref{eq:deterministic0}.

Following \cite{HandLV15}, we prove \eqref{eq:sufficient} by considering three complementary cases, which depend on the relative averaged parallel motion on $\Egl^c$, that is, $$\bar{\delta}=\sum_{ij\in \Egl^c}|\delta_{ij}|\|\bt_{ij}^*\|/\sum_{ij\in \Egl^c}\|\bt_{ij}^*\|.$$
These three cases can be simplistically categorized according to zero, large and small non-zero parallel motions on $\Egl^c$.

\textbf{Case 1:  $\boldsymbol{\bar{\delta}=0}$ or $\boldsymbol{\Egl^c=\emptyset}$.} \
Since either $\Egl^c=\emptyset$ or $\delta_{ij}=0$ for all $ij\in \Egl^c$,  the RHS of \eqref{eq:sufficient} is 0.

\textbf{Case 2: $\boldsymbol{\bar{\delta}\neq 0}$, $\boldsymbol{\Egl^c\neq\emptyset}$ and $\boldsymbol{\sum_{ij\in E'(K_n)}|\delta_{ij}|<\bar{\delta}|E'(K_n)|/8}$}. \
First, we obtain a lower bound on $|E'(K_n)|/|E(K_n)|$. The definition of $E'(K_n)$ and then the definition of $\mu$ in \eqref{defmu} result in
\[
\sum_{ij\in E(K_n)\setminus E'(K_n)}\|\bt_{ij}^*\|< \frac{1}{2}\mu |E(K_n)|= \frac{1}{2}\sum_{ij\in E(K_n)}\|\bt_{ij}^*\|.
\]
Consequently,
\begin{equation}\label{eq:Ekn}
\sum_{ij\in E'(K_n)}\|\bt_{ij}^*\|\geq \frac{1}{2}\sum_{ij\in E(K_n)}\|\bt_{ij}^*\|=\frac12 \mu |E(K_n)|.
\end{equation}
Using assumption~\ref{cond:gs_difference_t_bound} of the good-shape condition (Definition~\ref{def:goodshape}) and then \eqref{eq:Ekn}, we obtain that
\[
c_0\mu |E'(K_n)|\geq \sum_{ij\in E'(K_n)}\|\bt_{ij}^*\| \geq \frac{1}{2}\mu|E(K_n)|
\]
and consequently
 \begin{equation}\label{eq:est_E'}|E'(K_n)|\geq \frac{1}{2c_0}|E(K_n)|.\end{equation}

We change the definition of $L_b$ in \cite{HandLV15} to $L=\{ij\in \Egl^c: |\delta_{ij}|\geq \frac{1}{2}\bar{\delta}\}$ and derive the following inequality, which is analogous to (14) of \cite{HandLV15}:
\begin{equation}\label{eq:L2}
\sum_{ij\in L}|\delta_{ij}|\|\bt_{ij}^*\|=
\sum_{ij\in \Egl^c}|\delta_{ij}|\|\bt_{ij}^*\|
-\sum_{ij\in \Egl^c \setminus L}|\delta_{ij}|\|\bt_{ij}^*\|
\geq \frac{1}{2}\sum_{ij\in \Egl^c}|\delta_{ij}|\|\bt_{ij}^*\|.
\end{equation}
We modify the definition of $F_g$ in~\cite{HandLV15} to $F'(K_n)=\{ij\in E'(K_n): |\delta_{ij}|<\frac{1}{4}\bar{\delta}\}$ and following \cite{HandLV15}, while using the last assumption of this case (case 2), we obtain that
\[
\frac{1}{8}\bar{\delta}|E'(K_n)|>\sum_{ij\in E'(K_n)}|\delta_{ij}|\geq \sum_{ij\in E'(K_n)\setminus F'(K_n)}|\delta_{ij}|\geq\frac{1}{4}\bar{\delta}|E'(K_n)\setminus F'(K_n)|.
\]
We thus conclude that $|F'(K_n)|>\frac{1}{2}|E'(K_n)|$. Combining this inequality with \eqref{eq:est_E'} we conclude that for $n\geq 3$,
\begin{equation}\label{eq:Fkn}|F'(K_n)| > \frac{1}{4c_0}|E(K_n)|=\frac{n(n-1)}{8c_0}\geq \frac{n^2}{12c_0}.\end{equation}

By Lemma \ref{thm:lemma14}, $|B(ij)|\leq 6\epsilon_1 n^2$ for all $ij\in E(K_n)$. Combining this with \eqref{eq:Fkn}, we obtain that for $\epsilon_1\leq \frac{1}{144c_0}$,
\begin{equation}\label{eq:F1Bij}|F'(K_n)\setminus B(ij)| > \frac{n^2}{12c_0}-6\epsilon_1 n^2\geq \frac{n^2}{24c_0}.\end{equation}

The rest of the proof uses the above inequalities to obtain a lower bound on the LHS of \eqref{eq:sufficient} and a similar upper bound on the RHS of \eqref{eq:sufficient}.
To get the lower bound, we first note that the second inequality of Lemma \ref{thm:lemma5} implies that
\begin{equation}\label{eq:lemma6}
\sum_{ij\in \Egl}\eta_{ij}\geq \frac{c_1p}{16}\sum_{ij\in E(K_n)}\eta_{ij}.
\end{equation}
We thus need to find a lower bound for the RHS of \eqref{eq:lemma6}.

We next establish the inequality
\begin{equation}\label{eq:counting}
\sum_{ij\in \Egl^c}\sum_{\latop{kl\in E(K_n)}{kl\neq ij}}\eta(ij,kl)\leq \sum_{ij\in \Egl^c}3n^2\eta_{ij}+\sum_{ij\in E(K_n)}18\epsilon_0n^2\eta_{ij}
\end{equation}
by following 
a combinatorial argument of \cite{HandLV15} (see case 1 in the proof of Theorem 3.4 in \cite{HandLV15}). There are two differences in our cases. First, we replace $\Eb$ and $\Eg$, which are used in \cite{HandLV15}, with  $\Egl^c$ and $E(K_n)$. Second, the sets $\Egl^c$ and $E(K_n)$ have nonempty intersection, unlike $\Eb$ and $\Eg$.
The argument is that any fixed $ij$ in the first sum in the LHS of \eqref{eq:counting} appears in at most $n\choose 2$ $K_4$'s, where the other two vertices are chosen from the second sum, and at most $n$ $K_3$'s, where another vertex and either $i$ or $j$ are from the second sum. Therefore, when fixing $ij$ in the first sum, $\eta_{ij}$ can appear at most $6\cdot$ $n \choose 2$ $+ 3n = 3n^2$ times.
On the other hand, any fixed $kl$ in the second sum belongs to either $K_4$ or $K_3$ containing $ij$ in the first sum. By applying assumption \ref{cond:gs_max_degree} of the good-shape condition, $kl$ belongs to at most $ 2\epsilon_0 n(n-3)$ $K_4$'s, where $ij$ is incident to $kl$, $\epsilon_0 n^2$ $K_4$'s, where $ij$ is not incident to $kl$, and $2\epsilon_0 n$ $K_3$'s.
Therefore, when fixing $kl$ in the second sum, $\eta_{kl}$ can appear at most $6 \cdot 2 \epsilon_0 n (n-3) + 6 \epsilon_0 n^2 + 3 \cdot 2 \epsilon_0 n \leq 18 \epsilon_0 n^2$ times.

We recall that $\epsilon_0\leq c_1p^2/8\leq c_1/8$ and thus Corollary \ref{thm:coro1} implies that
\begin{equation*}
\sum_{ij\in E(K_n)}\eta_{ij}\geq  \sum_{ij\in E(K_n) \setminus \Egl^c}\eta_{ij}\geq \frac{c_1}{8\epsilon_0}\sum_{ij\in  \Egl^c}\eta_{ij}.
\end{equation*}
The above two inequalities yield
\begin{align}\label{eq:upper}
&\sum_{ij\in \Egl^c}\sum_{\latop{kl\in E(K_n)}{kl\neq ij}}\eta(ij,kl)\leq \frac{42\epsilon_0}{c_1}n^2\sum_{ij\in E(K_n)}\eta_{ij}.
\end{align}
The combination of \eqref{eq:lemma6} and \eqref{eq:upper} results in the following lower bound on the LHS of \eqref{eq:sufficient}
\begin{equation}
\label{eq:lower_LHS}
\sum_{ij\in \Egl}\eta_{ij} \geq \frac{c_1p}{16}\sum_{ij\in E(K_n)}\eta_{ij}\geq \frac{c_1^2 p}{3 \cdot 2^8  \epsilon_0 n^2}\sum_{ij\in \Egl^c}
\sum_{\latop{kl\in E(K_n)}{kl\neq ij}}\eta(ij,kl).
\end{equation}

In order to upper bound the RHS of \eqref{eq:sufficient} we first apply Lemma \ref{thm:lemma3}, which implies that for $ij\in L$  and $kl\in F'(K_n)\setminus B(ij)$
\[
\eta(ij,kl)\geq \frac{\beta}{4}|\delta_{kl}-\delta_{ij}|\|\bt_{ij}^*\|.
\]
For $ij\in L$, $|\delta_{ij}|>\frac12\bar\delta$ and for $kl\in F'(K_n)$, $|\delta_{kl}|<\frac14\bar\delta$. Consequently, for $ij\in L$ and $kl\in F'(K_n)\setminus B(ij)$, $|\delta_{kl}| < |\delta_{ij}|/2$ and
\begin{equation}\label{eq:etalower}
\eta(ij,kl)\geq \frac{\beta}{4}\big||\delta_{kl}|-|\delta_{ij}|\big|\|\bt_{ij}^*\|\geq\frac{\beta}{8}|\delta_{ij}|\|\bt_{ij}^*\|.
\end{equation} Applying first the inclusions $L\subseteq \Egl^c$ and $F'(K_n)\subseteq E(K_n)$, then \eqref{eq:etalower}, next \eqref{eq:F1Bij} and at last \eqref{eq:L2}, we obtain that
\begin{align*}\nonumber
&\sum_{ij\in \Egl^c}\sum_{\latop{kl\in E(K_n)}{kl\neq ij}}\eta(ij,kl)\geq
\sum_{ij\in L}\hspace{0.2cm}\sum_{kl\in F'(K_n)\setminus B(ij)}\eta(ij,kl) \\ &\geq
\sum_{ij\in L}|F'(K_n)\setminus B(ij)|\cdot \frac{\beta}{8}|\delta_{ij}|\|\bt_{ij}^*\|> \frac{\beta}{8}\cdot\frac{n^2}{24c_0}\sum_{ij\in L}|\delta_{ij}|\|\bt_{ij}^*\|\nonumber
\geq \frac{\beta}{16}\cdot\frac{n^2}{24c_0}\sum_{ij\in \Egl^c}|\delta_{ij}|\|\bt_{ij}^*\|.
\end{align*}
This equation implies the following upper bound for the RHS of \eqref{eq:sufficient}:
\begin{equation}
2 \sum_{ij\in \Egl^c}|\delta_{ij}|\|\bt_{ij}^*\| <
\frac{3 \cdot 2^8 c_0}{\beta n^2} \sum_{ij\in \Egl^c}\sum_{\latop{kl\in E(K_n)}{kl\neq ij}}\eta(ij,kl).
\label{eq:upper_RHS}
\end{equation}

Note that \eqref{eq:thm3} implies that the RHS of~\eqref{eq:upper_RHS} is less than the RHS of~\eqref{eq:lower_LHS}. This observation concludes \eqref{eq:sufficient} and consequently  the proof of the current case.

\textbf{Case 3: $\boldsymbol{\bar{\delta}\neq 0}$, $\boldsymbol{\Egl^c\neq\emptyset}$ and $\boldsymbol{\sum_{ij\in E'(K_n)}|\delta_{ij}|\geq \bar{\delta}|E'(K_n)|/8}$.} \
Similarly to case 2, in order to prove \eqref{eq:sufficient}, we obtain a lower bound for the LHS of \eqref{eq:sufficient} and a similar upper bound  for the RHS of \eqref{eq:sufficient}.

Following \cite{HandLV15}, we define $E_+=\{ij\in E(K_n): \delta_{ij}\geq 0\}$ and $E_-=\{ij\in E(K_n): \delta_{ij}<0\}$.
Using this notation, we rewrite the perturbation constraint of \eqref{eq:epsconstr} 
as
\[
\sum_{ij\in E_+}\delta_{ij}\|\bt_{ij}^*\|^2+\sum_{ij\in E_-}\delta_{ij}\|\bt_{ij}^*\|^2=0
\]
and conclude that
\begin{equation}\label{Eplushalf}
\sum_{ij\in E_+}|\delta_{ij}|\|\bt_{ij}^*\|^2=\sum_{ij\in E_-}|\delta_{ij}|\|\bt_{ij}^*\|^2=\frac{1}{2}\sum_{ij\in E(K_n)}|\delta_{ij}|\|\bt_{ij}^*\|^2.
\end{equation}

Next, we upper bound the RHS of \eqref{eq:sufficient} by a constant times the term $\sum_{ij\in E_-}\sum_{kl\in E_+}\eta(ij,kl)$.
We first lower bound the latter term by following \cite{HandLV15} and applying Lemma \ref{thm:lemma3} as follows
\begin{align*}
\sum_{ij\in E_-}\sum_{kl\in E_+}\eta(ij,kl)
\geq \sum_{ij\in E_-}\sum_{kl\in E_+\setminus B(ij)}\frac{\beta}{4}|\delta_{ij}|\|\bt_{ij}^*\| 
\geq \frac{\beta}{4}(|E_+|-|B(ij)|)\sum_{ij\in E_-}|\delta_{ij}|\|\bt_{ij}^*\|.
\end{align*}
The successive application of property \ref{cond:gs_difference_t_bound} of the good-shape condition, \eqref{Eplushalf}, the inclusion $E'(K_n)\subseteq E(K_n)$, the definition of $E'(K_n)$ together with the assumption $\sum_{ij\in E'(K_n)}|\delta_{ij}|\geq \frac{1}{8}\bar{\delta}|E'(K_n)|$ and \eqref{eq:est_E'} results in
\begin{align}\nonumber
&\sum_{ij\in E_-}|\delta_{ij}|\|\bt_{ij}^*\|\geq \frac{1}{c_0\mu} \sum_{ij\in E_-}|\delta_{ij}|\|\bt_{ij}^*\|^2= \frac{1}{2c_0\mu}\sum_{ij\in E(K_n)}|\delta_{ij}|\|\bt_{ij}^*\|^2\\
&\geq
\frac{1}{2c_0\mu}\sum_{ij\in E'(K_n)}|\delta_{ij}|\|\bt_{ij}^*\|^2
\geq \frac{1}{2c_0\mu}\cdot \frac{1}{4}\mu^2\cdot \frac{1}{8}\bar\delta |E'(K_n)| \geq \frac{\mu\bar\delta n^2}{512c_0^2}.\label{eq:lowerEminus}
\end{align}

Assuming $|E_+|\geq |E(K_n)|/2$ and combining \eqref{eq:lowerEminus}, the fact that $|E(K_n)|=n(n-1)/2\geq n^2/4$ for $n\geq 2$, and the assumption $\epsilon_1\leq 1/96$, gives
\begin{equation*}
\frac{\beta}{4}(|E_+|-|B(ij)|)\sum_{ij\in E_-}|\delta_{ij}|\|\bt_{ij}^*\|\geq \frac{\beta\mu\bar{\delta}n^2}{2048c_0^2}\Big(\frac{1}{2}|E(K_n)|-6\epsilon_1 n^2\Big)\geq \frac{\beta\mu\bar{\delta}n^4}{2^{15}c_0^2}.
\end{equation*}
Consequently,
\begin{equation}\label{eq:lowereta3}
\sum_{ij\in E_-}\sum_{kl\in E_+}\eta(ij,kl)\geq \frac{\beta\mu\bar{\delta}n^4}{2^{15}c_0^2}.
\end{equation}
Assuming on the contrary that $|E_-|\geq |E(K_n)|/2$ and following the same arguments, while switching between $E_+$ and $E_-$, also yield
\eqref{eq:lowereta3}.

We conclude with the following upper bound on the RHS of \eqref{eq:sufficient} by first applying the definition of $\bar\delta$, then condition \ref{cond:gs_difference_t_bound} of Definition \ref{def:goodshape}, then condition \ref{cond:gs_max_degree} of Definition \ref{def:goodshape}, and at last \eqref{eq:lowereta3}:
\begin{align}
\sum_{ij\in \Egl^c}|\delta_{ij}|\|\bt_{ij}^*\| 
=\bar \delta\sum_{ij\in \Egl^c}\|\bt_{ij}^*\|\leq \bar{\delta}c_0\mu|\Egl^c|\leq \bar{\delta}c_0\mu\epsilon_0n^2
\leq \frac{2^{15}c_0^3\epsilon_0}{\beta n^2}\sum_{ij\in E_-}\sum_{kl\in E_+}\eta(ij,kl).\label{eq:uppereta3}
\end{align}

In order to obtain a lower bound on the LHS of \eqref{eq:sufficient}, we use the following result from \cite[page 38]{HandLV15}, which is obtained by counting the number of elements in the sum of $\eta$'s:
\begin{equation}\label{eq:etacounting3}
\sum_{ij\in E_-}\sum_{kl\in E_+}\eta(ij,kl)\leq 3n^2\sum_{ij\in E(K_n)}\eta_{ij}.
\end{equation}
We remark that although we modified the definition of $E_+$ and $E_-$, this result still holds. We conclude a lower bound on the LHS of \eqref{eq:sufficient} by applying the second inequality of Lemma \ref{thm:lemma5} and then \eqref{eq:etacounting3} as follows:
\begin{equation}\label{eq:lowerboundeta3}
\sum_{ij\in \Egl}\eta_{ij}\geq \frac{c_1p}{16}\sum_{ij\in E(K_n)}\eta_{ij}
\geq \frac{c_1p}{48n^2}\sum_{ij\in E_-}\sum_{kl\in E_+}\eta(ij,kl).
\end{equation}
The combination of \eqref{eq:uppereta3}, \eqref{eq:lowerboundeta3} and the assumption $\frac{\beta c_1p}{ 2^{21}c_0^3\epsilon_0}\geq 2$
verifies \eqref{eq:sufficient}.

\section{Proof of Theorem~\ref{thm:corruptions}}\label{sec:proof3}
It is sufficient to show that
\begin{equation}\label{eq:thm4wp}\epsilon_0 = O\left(\max\left\{p^2/\log^4 n\,, \,\,(p^{1/4}\log^{3/8}n)\cdot\epsilon_b^{3/4}\right\}\right)   \ \text{ w.p. } 1-O(n^{-5}).
\end{equation}
Indeed, combining \eqref{eq:thm4wp} with the assumption $\epsilon_b=O(p^{7/3}/\log^{9/2}n)$ of Theorem \ref{thm:corruptions}  implies that $\epsilon_0 = O(p^2/\log^3 n)$ and this concludes Theorem \ref{thm:corruptions}.

In the following we prove equation \eqref{eq:thm4wp}. Note that $\Egl^c\subseteq \Eb\cup E_s$, where $E_s=\{ij\in E:\|\bt_i^*-\bt_j^*\|<1/c^*\}$ is the set of short edges. Therefore, to conclude the theorem it is enough to estimate the maximal degree of $E_s$. Our estimate uses the following notation: $I$ denotes the indicator function, the neighborhood $N(\bt_i^*)$ of $\bt_i^*\in V$ includes all indices $j\in [n]$ such that $ij\in E$, and for $a$, $b \in \mathbb{R}$, $a \lesssim b$ if and only if $b= \Omega(a)$. We will prove that for any fixed $\bt_i^*\in V$
\begin{equation}\label{eq:lessmax}
\sum_{j\in N(\bt_i^*)} I\Big(\|\bt_i^*-\bt_j^*\|<\frac{1}{c^*}\Big) \lesssim \max\left\{\frac{np^2}{\log^4 n}\,,\, p^{\frac14}\epsilon_b^{\frac34}n\log^{\frac38}n\right\} \ \text{ w.p. } 1-O(n^{-6}).
\end{equation}
Taking a union bound yields
\[
\frac{\text{Maximal degree of $E_s$}}{n}\lesssim \max\left\{\frac{p^2}{\log^4 n}\,,\, p^{\frac14}\epsilon_b^{\frac34}\log^{\frac38}n\right\} \ \text{ w.p. } 1-O(n^{-5})
\] and this implies \eqref{eq:thm4wp} and thus concludes the proof of the theorem.

We derive \eqref{eq:lessmax} by using the following function of $c^*$, which is defined with respect to a Gaussian random variable $\bx\sim N(\b0,\bI)$ with pdf $\Phi$:
\begin{equation}\label{eq:gc}
g(c^*)=\Pr\Big(\Big\{\|\bx\|<\frac{1}{c^*}\Big\}\Big)=\int\limits_{B(\b0,\frac{1}{c^*})}\Phi(\bt)d\bt.
\end{equation}
We note that for fixed $\bt_i^*\in V$,
\begin{align}\label{eq:gcstar}
\Pr( \|\bt_i^*-\bt_j^*\|<1/c^*)
=\int_{B(\bt_i^*,\frac{1}{c^*})} \Phi(\bt)d\bt
\leq \int_{B(0,\frac{1}{c^*})} \Phi(\bt)d\bt=\Pr(\|\bt_j^*\|<1/c^*)=g(c^*).
\end{align}
 Furthermore,  $I(ij\in E \text{ and } \|\bt_i^*-\bt_j^*\|<1/c^*)$ is a Bernoulli random variable Bern$(\mu)$  with $\mu=p\Pr( \|\bt_i^*-\bt_j^*\|<1/c^*)\leq pg(c^*)$, where the last inequality follows from \eqref{eq:gcstar}. This observation and Chernoff bound can be used to conclude \eqref{eq:lessmax}. It is easily done in Section \ref{sec:thm4caseA} when $g(c^*)\lesssim 1/\sqrt n$, while only using the first term in the RHS of \eqref{eq:lessmax}. The other case, where $g(c^*)\gtrsim 1/\sqrt n$, is more complicated and verified in Section \ref{sec:thm4caseB} and uses the second term in the RHS of \eqref{eq:lessmax}.
\subsection{Proof for the case where $\boldsymbol{g(c^*)\lesssim 1/\sqrt n}$.}\label{sec:thm4caseA}

In order to verify \eqref{eq:lessmax},
 we use the following version of Chernoff bound~\cite{chernoff} for Bernoulli random variables: If $X_1, X_2,\cdots, X_n$ $\sim$ Bern$(\mu)$ i.i.d., then
\begin{equation}\label{eq:chernoff2}
\Pr\Big(\frac{1}{n}\sum_{i=1}^n X_i-\mu>\delta \mu\Big)<\exp(-\delta n\mu/3) \ \text{ for any } \delta\geq 1.
\end{equation}
We apply this inequality to
\begin{equation}\label{eq:Ixj}
X_{ij}=I(ij\in E\text{ and }\|\bt_i^*-\bt_j^*\|<1/c^*), \text{ where } i\in [n] \text{ is fixed and } j \in [n] \setminus \{i\}.
\end{equation}
As we explained above, $X_{ij}\sim $Bern$(\mu)$, where $\mu\leq pg(c^*)$ and thus with probability $1-\exp(-\Omega(\delta npg(c^*)))$
\begin{align*}
\sum_{j\in N(\bt_i^*)} I\Big(\|\bt_i^*-\bt_j^*\|<\frac{1}{c^*}\Big)=\sum_{j\in [n]\setminus \{i\}} X_{ij}\lesssim (\delta+1) npg(c^*)\approx\delta npg(c^*).
\end{align*}
Taking $\delta=p/(\log^4n g(c^*))$ results in
\begin{equation}\label{eq:caseAconclusion}
\sum_{j\in N(\bt_i^*)} I\Big(\|\bt_i^*-\bt_j^*\|<\frac{1}{c^*}\Big) \lesssim \frac{np^2}{\log^4 n} \ \text{ w.p. } 1-e^{-\Omega\left(\frac{np^2}{\log^4 n}\right)}.
\end{equation}
Note that the assumptions $g(c^*)\lesssim n^{-1/2}$ and $p\gtrsim \sqrt[3]{\log n/n}$ guarantee that our choice of $\delta$ satisfies the constraint $\delta \geq 1$ in \eqref{eq:chernoff2}.
Indeed, $\delta = p/(\log^4ng(c^*))=\Omega(n^{1/6}/\log^{11/3}n)>1$ for $n$ sufficiently large. Also, the assumption $p\gtrsim \sqrt[3]{\log n/n}$
implies that $\Omega(np^2/(\log^4 n))\gtrsim n^{1/3}/\log^{3/10} n$. Therefore, the probability in \eqref{eq:caseAconclusion} is greater than $1-O(n^{-6})$ and thus \eqref{eq:lessmax}
is proved in the current case.

\subsection{Proof for the case where $\boldsymbol{g(c^*)\gtrsim 1/\sqrt n}$.}\label{sec:thm4caseB}
We use another version of Chernoff bound~\cite{chernoff} for Bernoulli random variables: If $X_1, X_2,\cdots, X_n$ $\sim $Bern$(\mu)$ i.i.d., then
\begin{equation}\label{eq:chernoff}
\Pr\Big(\Big|\frac{1}{n}\sum_{i=1}^n X_i-\mu\Big|>\delta \mu\Big)<2 \cdot \exp(-\delta^2\mu n/3) \ \text{ for all } 0\leq \delta\leq 1.
\end{equation}
Applying this inequality to $\{X_{ij}\}_{j \in [n]\setminus \{i\}}$ of \eqref{eq:Ixj} yields that with probability $1-\exp(-\Omega(npg(c^*)))$
\begin{align}\label{eq:goalcaseB}
\sum_{j\in N(\bt_i^*)} I\Big(\|\bt_i^*-\bt_j^*\|<\frac{1}{c^*}\Big)=\sum_{j\in [n]\setminus \{i\}}X_{ij}
\lesssim npg(c^*).
\end{align}
Note that the probability $1-\exp(-\Omega(npg(c^*)))$ exponentially approaches $1$ as $n\to\infty$.
Indeed,  the assumptions $g(c^*)\gtrsim 1/\sqrt n$ and $p\gtrsim n^{-1/3}\log^{1/3} n$ imply that $\Omega(npg(c^*))=\Omega(n^{1/6}\log^{1/3}n)$.

Our goal is to upper bound the RHS of \eqref{eq:goalcaseB} by the second term in the RHS of \eqref{eq:lessmax}. In order to do this we use the following Lemmas, which we prove in Section \ref{sec:lemma}.
\begin{customlemma}{4}\label{thm:lemmahg}
Assuming the setting of Theorem \ref{thm:corruptions}, there exists an absolute constant $M$ such that
\begin{equation}\label{eq:hg}
\frac{1}{c^*}\leq M \ \text{ w.p. }1-O(n^{-6}).
\end{equation}
\end{customlemma}

\begin{customlemma}{5}\label{thm:lemmah}
Assume the setting of Theorem \ref{thm:corruptions}. If $g(c^*)\gtrsim 1/\sqrt n$, then
\begin{equation}\label{eq:q}
\frac{g(c^*)}{c^*} \lesssim \frac{\epsilon_b \sqrt{\log n}}{p} \ \text{ w.p. }1-O(n^{-6}).
\end{equation}

\end{customlemma}

Given the setting of Theorem \ref{thm:corruptions}, we claim that there exists $\bx_M\in \mathbb{R}^3$ with $\|\bx_M\|=M$ such that
\begin{align}\label{eq:gest}
\Phi(\bx_M)\text{Vol}\Big(\frac{1}{c^*}\Big)\leq g(c^*)\leq \Phi(\b0)\text{Vol}\Big(\frac{1}{c^*}\Big) \ \text{ w.p. } 1-O(n^{-6}),
\end{align}
where Vol$(r)$ is the volume of $B(0,r)$. The second inequality of \eqref{eq:gest} is deterministic and follows from the definition of $g$ in \eqref{eq:gc}. The first inequality follows from Lemma \ref{thm:lemmahg}. Indeed, with the same probability the minimum of $\Phi$ in the closed ball $\overline{B(\b0,1/c^*)}$ is greater than the minimum of  $\Phi$ in $\overline{B(\b0,M)}$ and it occurs on the boundary of this ball.
Equation \eqref{eq:gest} implies that $g(c^*)\approx 1/(c^*)^3$ and applying this observation to \eqref{eq:q} results in
\begin{equation}\label{eq:q2}
g(c^*) \lesssim \Big(\frac{\epsilon_b \sqrt{\log n}}{p}\Big)^{\frac{3}{4}} \ \text{ w.p. }1-O(n^{-6}).
\end{equation}
Combining \eqref{eq:q2} with \eqref{eq:goalcaseB} yields that with probability $1-O(n^{-6})$,
\[
\sum_{j\in N(\bt_i^*)} I\Big(\|\bt_i^*-\bt_j^*\|<\frac{1}{c^*}\Big) \lesssim npg(c^*)\lesssim np\Big(\frac{\epsilon_b \sqrt{\log n}}{p}\Big)^{\frac{3}{4}}= p^{\frac14}\epsilon_b^{\frac34}n\log^{\frac38}n.
\]
This concludes Theorem \ref{thm:corruptions}, though it remains to prove Lemmas \ref{thm:lemmahg} and \ref{thm:lemmah}.
\subsection{Proofs of Lemmas \ref{thm:lemmahg} and \ref{thm:lemmah}}\label{sec:lemma}
We first establish the following inequality, which is necessary for the proofs of both lemmas:
\begin{equation}\label{eq:cstarupper}
\sum_{ij\in E: \ \|\bt_i^*-\bt_j^*\|<\frac{1}{c^*}}\|\bt_i^*-\bt_j^*\|\lesssim\epsilon_b n^2\sqrt{\log n} \ \text{ w.p. } 1-O(n^{-6}).
\end{equation}
We prove \eqref{eq:cstarupper} by establishing an inequality involving the left and right derivatives of $f_{ij}(c\bt_i^*\,, c\bt_j^*)$ in $c$.
Since $f_{ij}(\bt_i,\bt_j)$ only depends on $\bt_i-\bt_j$ and since we assumed that $\bt_s=\b0$, $c^*$ can be defined as follows:
\begin{equation}\label{eq:oracle1}
c^*=\argmin_{c\in\mathbb{R}}\sum_{ij\in E}F_{ij}(c),
\end{equation}
where $F_{ij}(c)=f_{ij}(c\bt_i^*,c\bt_j^*)$.
This expression implies that
\begin{equation}\label{eq:opt}\sum_{ij\in E}F_{ij}'(c^{*-})\leq 0 \text{ and } \sum_{ij\in E}F_{ij}'(c^{*+})\geq 0.
\end{equation}
Indeed, WLOG if the second inequality in \eqref{eq:opt} is violated and  $\sum_{ij\in E}F_{ij}'(c^{*+}) < 0$, then there exists $\tilde c>c^*$ such that $\sum_{ij\in E}F_{ij}(\tilde c)<\sum_{ij\in E}F_{ij}(c^*)$. This contradicts the global optimality of $c^*$.

We estimate $F'_{ij}(c^+)$ for $ij\in E$ in 4 complementary cases.

\begin{enumerate}\item For $ij\in \Eg$ and $c\geq1/\|\bt_i^*-\bt_j^*\|$, $F_{ij}(c)=0$ and thus
$
F'_{ij}(c^+)=0.
$
\item For $ij\in \Eg$ and $c<1/\|\bt_i^*-\bt_j^*\|$, $F_{ij}(c)=1-\|\bt_i^*-\bt_j^*\| \cdot c$ and thus
$
F'_{ij}(c)=-\|\bt_i^*-\bt_j^*\|.
$
\item For $ij\in \Eb$ and $c\geq 1/\langle\bt_i^*-\bt_j^*\,, \bga_{ij}\rangle$, $F_{ij}(c)=\sin\alpha \cdot \|\bt_i^*-\bt_j^*\| \cdot c $, where $0<\alpha\leq\pi/2$ and thus
$
F'_{ij}(c^+)\leq \|\bt_i^*-\bt_j^*\|.
$
\item For $ij\in \Eb$ and $c<1/\langle\bt_i^*-\bt_j^*\,, \bga_{ij}\rangle$, $F_{ij}(c)=\|c\bt_i^*-c\bt_j^*-\bga_{ij}\| $ and thus by the triangle inequality
\begin{align*}
\left|F'_{ij}(c^+)\right|&=\lim_{h\to 0^+}\left|\frac{\|(c+h)\bt_i^*-(c+h)\bt_j^*-\bga_{ij}\| -\|c\bt_i^*-c\bt_j^*-\bga_{ij}\| }{h}\right|\\
&\leq\lim_{h\to 0^+}\left|\frac{\|h\bt_i^*-h\bt_j^*\| }{h}\right|= \|\bt_i^*-\bt_j^*\|.
\end{align*}
\end{enumerate}

The combination of the 4 cases above and the second inequality of \eqref{eq:opt} yield
 \begin{equation}\label{eq:condition2}
  -\sum_{ij\in \Eg: \ \|\bt_i^*-\bt_j^*\|<\frac{1}{c^*}}\|\bt_i^*-\bt_j^*\| + \sum_{ij\in \Eb}F_{ij}'(c^{*+})\geq 0.\end{equation}
Combining $|F'_{ij}(c^+)|\leq \|\bt_i^*-\bt_j^*\|$ with \eqref{eq:condition2}   results in the estimate
 \begin{align}
  \sum_{ij\in \Eg: \ \|\bt_i^*-\bt_j^*\|<\frac{1}{c^*}}\|\bt_i^*-\bt_j^*\| 
  \leq\sum_{ij\in \Eb}F_{ij}'(c^*)
 \leq \sum_{ij\in \Eb}\|\bt_i^*-\bt_j^*\|\leq \sum_{ij\in \Eb}(\|\bt_i^*\|+\|\bt_j^*\|) \lesssim  \epsilon_b n^2 \cdot \max_{i\in [n]}\|\bt_i^*\|.\label{eq:ebmax}
  \end{align}
By the second property of Lemma 3.10 of \cite{HandLV15} and its proof,
\begin{equation}\label{eq:maxti}
\max_{i\in [n]}\|\bt_i^*\|\lesssim\sqrt{\log n}\ \text{ w.p. } 1-O(n^{-6}).
\end{equation}
This observation and \eqref{eq:ebmax} results in \eqref{eq:cstarupper}.

Using \eqref{eq:cstarupper}, we prove Lemma \ref{thm:lemmahg} and \ref{thm:lemmah} in Sections \ref{sec:lemmahg} and \ref{sec:lemmah} respectively.

\subsubsection{Proof of Lemma \ref{thm:lemmahg}}\label{sec:lemmahg}
We assume on the contrary that $1/c^* > M$ and use this assumption to derive an inequality for the random variables
\begin{equation}
\label{eq:Yij}
Y_{ij}=I(ij\in E \text{ and }\|\bt_i^*-\bt_j^*\|< 1/c^*) \cdot \|\bt_i^*-\bt_j^*\| \ \text{ for fixed } i \in [n] \text{ and } j \in [n]\setminus \{i\}.
\end{equation}
This inequality uses
the constant $\mu_0=\inf_{\|\bx\|< 5}\mathbb{E}[I(\|\bx-\by\|< 1/c^*) \cdot \|\bx-\by\|]$, where $\by\sim N(\b0,\bI)$, and is formulated as follows:
\begin{align}\label{eq:lemmahobj}
\frac12 n^2p\mu_0&\lesssim \sum_{\latop{i \in [n]:}{\|\bt_i^*\|< 5}}\sum_{\ j\in [n]\setminus \{i\}}Y_{ij}\lesssim \frac{n^2p^{7/3}}{\log^{4}n}  \ \text{ w.p. } 1-O(n^{-6}).
\end{align}
We note that \eqref{eq:lemmahobj} results in contradiction w.p.~$1-O(n^{-6})$ and thus concludes the proof. Indeed, it implies that with this probability $\mu_0\lesssim p^{4/3}/\log^{4} n\to 0$ as $n\to \infty$. Since $\mu_0$ is monotonically increasing as a function of $1/c^*$, $1/c^*\to 0$ as $n\to \infty$, which contradicts our assumption.

The rest of this section proves \eqref{eq:lemmahobj} under the assumption that  $1/c^* > M$. We first establish the second inequality of \eqref{eq:lemmahobj} as follows.
We first note that
\begin{equation} \sum_{\latop{i \in [n]:}{\|\bt_i^*\| < 5}} \sum_{\ j \in [n] \setminus \{ i\}}  Y_{ij} \leq \sum_{i \in [n]} \sum_{\ j \in [n] \setminus \{ i\}} Y_{ij}  = 2 \sum_{ij \in E(K_n)} Y_{ij}.
\label{eq:prop_Yij}
\end{equation}
Subsequently applying \eqref{eq:prop_Yij}, the definition of $Y_{ij}$, \eqref{eq:cstarupper} and the assumption of Theorem \ref{thm:corruptions} that  $\epsilon_b= O(p^{7/3}/\log^{9/2} n)$, we obtain that
\begin{align}
\sum_{\latop{i \in [n]:}{\|\bt_i^*\| < 5}} \sum_{\ j\in [n]\setminus \{i\}} Y_{ij}\leq 2\sum_{ij\in E: \ \|\bt_i^*-\bt_j^*\|<\frac{1}{c^*}}\|\bt_i^*-\bt_j^*\|\lesssim\epsilon_b n^2\sqrt{\log n}\lesssim \frac{n^2p^{7/3}}{\log^{4}n}.
\end{align}

To prove the first inequality of \eqref{eq:lemmahobj}, we introduce the following notation:
Fix $i \in [n]$ and assume that $\|\bt_i^*\|< 5$.
Assume further that $\bt_1^*,\dots, \bt_n^*$ are  i.i.d.~$N(\b0,\bI)$ and let $Y_{ij}$ be defined in \eqref{eq:Yij},
$\bar Y_i= \sum_{j\in [n]\setminus \{i\}} Y_{ij}/(n-1)$ and $\mu_i=\mathbb{E}(\bar Y_i)=p \cdot \mathbb{E}[I(\|\bt_i^*-\bt_j^*\|< 1/c^*) \cdot \|\bt_i^*-\bt_j^*\|)]$.
Applying Hoeffding's inequality~\cite{hoeffding} to $\{Y_{ij}\}_{j \in [n] \setminus \{i\}}$
\begin{equation}\label{eq:meandev0}
\bar Y_i\geq \frac12 \mu_i \ \text{ w.p. } 1-2 \cdot \exp\left(-\frac{\mu_i^2n}{2\cdot\max\{Y_{ij}^2\}}\right).
\end{equation}
Since $\mu_i$ is monotonically increasing with respect to $1/c^*$, the assumption that $1/c^*>M$ implies that $\mu_i=\Omega(1)$. Combining this observation with \eqref{eq:meandev0} and  the definitions of $\mu_i$ and $\mu_0$ results in
\begin{equation}\label{eq:meandev}
\bar Y_i\geq \frac12 \mu_i\geq \frac12 \mu_0 p \ \text{ w.p. } 1-2 \cdot \exp\left(-\Omega\left(\frac{n}{\max\{Y_{ij}^2\}}\right)\right).
\end{equation}
Using the definition of $\bar Y_i$, we rewrite \eqref{eq:meandev} as follows: For fixed $i\in [n]$ with $\|\bt_i^*\|< 5$
\begin{equation}\label{eq:mulower}
\sum_{j\in [n]\setminus \{i\}}Y_{ij}\gtrsim np\mu_0 \ \text{ w.p. } 1-2 \cdot \exp\left(-\Omega\left(\frac{n}{\max\{Y_{ij}^2\}}\right)\right).
\end{equation}
A union bound of \eqref{eq:mulower} over all $i \in [n]$   with $\|\bt_i^*\|<5$ has the following form:
\begin{multline}\label{eq:unionyij}\sum_{\latop{i \in [n]:}{\|\bt_i^*\| < 5}}  \sum_{\ j\in [n]\setminus \{i\}}  Y_{ij}\gtrsim  \sum_{\latop{i \in [n]:}{\|\bt_i^*\| < 5}}  np\mu_0
= \sum_{i\in [n]} I(\|\bt_i^*\|<5)\cdot np\mu_0\\
\text{w.p. } 1-2\sum_{i\in [n]} I(\|\bt_i^*\|<5) \cdot\exp\left(-\Omega\left(\frac{n}{\max\{Y_{ij}^2\}}\right)\right).
\end{multline}

In order to conclude the first inequality of  \eqref{eq:lemmahobj} from \eqref{eq:mulower}, we first note that the application of \eqref{eq:chernoff} yields
\begin{equation}\label{eq:n2}
\sum_{i=1}^nI(\|\bt_i^*\|<5)>n/2 \ \text{ w.p. } 1-2 \cdot \exp(-\Omega(n)),
\end{equation}
and the application of basic inequalities and \eqref{eq:maxti} implies that
\begin{equation}\label{eq:boundxij}
0\leq \max_{ij\in E}\{Y_{ij}\}\leq \max_{ij\in E} \{\|\bt_i^*-\bt_j^*\|\}\leq 2\cdot\max_{i\in [n]}\{\|\bt_i^*\|\} \lesssim \sqrt{\log n} \ \text{ w.p. } 1-O(n^{-6}).
\end{equation}
Using \eqref{eq:n2}, we replace $\sum_{i=1}^nI(\|\bt_i^*\|<5)$ with $n/2$ in \eqref{eq:unionyij}. However, the new
probabilistic estimate is obtained by a union bound that uses the probabilities in \eqref{eq:n2} and \eqref{eq:unionyij}.
We thus obtain that
\begin{equation}\label{eq:eqyij}
\sum_{\latop{i \in [n]:}{\|\bt_i^*\| < 5}}\sum_{\ j\in [n]\setminus \{i\}}Y_{ij}\gtrsim \frac12n^2p\mu_0 \ \text{ w.p. } 1-n \cdot \exp\left(-\Omega\left(\frac{n}{\max\{Y_{ij}^2\}}\right)\right)-2 \cdot \exp(-\Omega(n)).
\end{equation}
Similarly, using \eqref{eq:boundxij}, we replace $\max\{Y_{ij}^2\}$ in the probability of \eqref{eq:eqyij} with $\log(n)$, but we also modify this probability by applying a union bound that uses the probabilities of \eqref{eq:eqyij} and \eqref{eq:boundxij}. We thus obtain that
\begin{equation*}
\sum_{\latop{i \in [n]:}{\|\bt_i^*\| < 5}}\sum_{\ j\in [n]\setminus \{i\}}Y_{ij}\gtrsim \frac12n^2p\mu_0 \ \text{ w.p. } 1-n \cdot \exp\Big(-\Omega\Big(\frac{n}{\log n}\Big)\Big)-2 \cdot \exp(-\Omega(n))-O(n^{-6}).
\end{equation*}
Note that this equation immediately implies \eqref{eq:lemmahobj} and thus concludes the proof of the lemma.

\subsubsection{Proof of Lemma \ref{thm:lemmah}}\label{sec:lemmah}
To prove the lemma, it suffices to verify w.p. $1-O(n^{-6})$ that
\begin{align}\label{eq:hcstarupper}
\sum_{ij\in E: \ \|\bt_i^*-\bt_j^*\|< \frac{1}{c^*}} \|\bt_i^*-\bt_j^*\|
\gtrsim \frac{1}{2c^*}\cdot npg(c^*) \cdot\frac{n}{2}.
\end{align}
Indeed, Lemma \ref{thm:lemmah} clearly follows by combining \eqref{eq:cstarupper} and \eqref{eq:hcstarupper}.

We first bound from below the LHS of \eqref{eq:hcstarupper} by a sum of random variables, which we define as follows. We arbitrarily fix  $i \in [n]$  such that $\|\bt_i^*\| < 5$ and for all $j \in [n]\setminus \{i\}$ let
$Z_{ij}=I(ij\in E \text{ and } 1/(2c^*)<\|\bt_i^*-\bt_j^*\|<1/c^*)$.
We note that
\begin{align}\label{eq:cstarzij}
\sum_{ij\in E: \ \|\bt_i^*-\bt_j^*\|< \frac{1}{c^*}} \|\bt_i^*-\bt_j^*\| &\geq \sum_{\latop{ij\in E: \ \|\bt_i^*\|< 5}{ \frac{1}{2c^*}<\|\bt_i^*-\bt_j^*\|<\frac{1}{c^*}}} \|\bt_i^*-\bt_j^*\|
=\frac{1}{2}\sum_{\latop{i \in [n]:}{\|\bt_i^*\|< 5}}\sum_{\latop{j\in N(\bt_i^*):} {\frac{1}{2c^*}<\|\bt_i^*-\bt_j^*\|<\frac{1}{c^*}}} \|\bt_i^*-\bt_j^*\|\nonumber\\
&=\frac{1}{2}\sum_{\latop{i \in [n]:}{\|\bt_i^*\|< 5}}\sum_{\ j\in [n]\setminus \{i\}} Z_{ij}\|\bt_i^*-\bt_j^*\|\geq \frac{1}{2}\cdot \frac{1}{2c^*}\sum_{\latop{i \in [n]:}{\|\bt_i^*\|< 5}}\sum_{\ j\in [n]\setminus \{i\}} Z_{ij}.
\end{align}

It remains to bound the RHS of \eqref{eq:cstarzij} by the RHS of \eqref{eq:hcstarupper} with high probability and conclude the proof. For this purpose, we introduce the following auxiliary function, which uses the random variable $\by \sim N(\b0,\bI)$,
\begin{equation}\label{eq:hc}
h(c^*)=\inf_{\|\bx\|<5}\Pr\Big(\Big\{\frac{1}{2c^*}<\|\bx-\by\|<\frac{1}{c^*}\Big\}\Big)=\inf_{\|\bx\|< 5}\int\limits_{B(\bx,\frac{1}{c^*})\setminus B(\bx,\frac{1}{2c^*})}\Phi(\bt)d\bt.
\end{equation}
In a somewhat similar way to establishing \eqref{eq:gest}, we note that there exists $\bx_0 \in \mathbb R^3$  with $\|\bx_0 \|=5$,
such that
\begin{align}
C_1\text{Vol}\Big(\frac{1}{2c^*}\Big)\leq h(c^*)&\leq C_2\text{Vol}\Big(\frac{1}{c^*}\Big) \ \text{ w.p. } 1-O(n^{-6}),\label{eq:hest}
\end{align}
where $C_1=\inf_{\|\bx-\bx_0\|< M} \Phi(\bx)$, $C_2=\sup_{\|\bx-\bx_0\|< M} \Phi(\bx)$. Thus, equation \eqref{eq:gest} and \eqref{eq:hest} imply that
\begin{equation}\label{eq:hgequiv}
g(c^*)\approx h(c^*)\approx \frac{1}{c^{*3}}   \ \text{ w.p. } 1-O(n^{-6}).
\end{equation}
We further note that $Z_{ij}\sim $Bern$(\mu_i)$, where $\mu_i\geq ph(c^*)$.  Combining this observation with \eqref{eq:chernoff} yields that
\begin{equation}\label{eq:nph}
\sum_{j\in [n]\setminus \{i\}}  Z_{ij} \gtrsim nph(c^*)\ \text{ w.p. }1-2 \cdot \exp(-\Omega(nph(c^*))).
\end{equation}

We conclude the proof of \eqref{eq:hcstarupper} as follows. Applying a union bound for \eqref{eq:nph} over all $i$ such that $\|\bt_i^*\|< 5$ yields
\begin{align}\label{eq:nph1}
\sum_{\latop{i \in [n]:}{\|\bt_i^*\|< 5}}\sum_{j\in [n] \setminus \{i\}} Z_{ij}
\gtrsim nph(c^*) \cdot \sum_{i=1}^nI(\|\bt_i^*\|<5)
\ \text{ w.p. } \
1-2 \sum_{i=1}^nI(\|\bt_i^*\|<5) \cdot \exp(-\Omega(np h(c^*))).
\end{align}
Using \eqref{eq:n2}, we replace $\sum_{i=1}^nI(\|\bt_i^*\|<5)$ with $n/2$ in \eqref{eq:nph1} and also modify the probabilistic estimate by a union bound that uses the probabilities in \eqref{eq:n2} and \eqref{eq:nph1} as follows
\begin{align}\label{eq:nph2}
\sum_{\latop{i \in [n]:}{\|\bt_i^*\|< 5}}\sum_{j\in [n] \setminus \{i\}} Z_{ij}
\gtrsim \frac{n^2}{2} ph(c^*)
\ \text{ w.p. } \
1-n \cdot \exp(-\Omega(np h(c^*)))-2 \cdot \exp(-\Omega(n)).
\end{align}
At last, by combining \eqref{eq:hgequiv} and \eqref{eq:nph2} and applying a union bound, we obtain that
\begin{align}
\sum_{\latop{i \in [n]:}{\|\bt_i^*\|< 5}}\sum_{j\in [n] \setminus \{i\}} Z_{ij}
\gtrsim  \frac{n^2}{2} p g(c^*) \label{eq:npgn}
\ \text{ w.p. } \
P_1=1-n \cdot \exp(-\Omega(np g(c^*)))-2 \cdot \exp(-\Omega(n))-O(n^{-6}).
\end{align}
The  assumptions $p=\Omega(\sqrt[3]{\log n/n})$ and $g(c^*)\gtrsim 1/\sqrt n$ imply that  $\Omega(npg(c^*))\gtrsim \Omega(n^{1/6}\log^{1/3} n)$. Therefore, $P_1=1-O(n^{-6})$.
Equation \eqref{eq:hcstarupper}, and thus the lemma, follows by
combining \eqref{eq:cstarzij} and \eqref{eq:npgn}.

\appendix

\section{Parallel Rigidity under the Setting of Theorem~\ref{thm:main}}
\label{sec:parallel}

A graph $G([n],E)$ with distinct vertex locations $\{\bt_i^*\}_{i=1}^n \subseteq \R^3$ and true edge directions $\{\bga_{ij}^*\}_{ij\in E}\in S^2$ is parallel rigid if its vertex locations can be uniquely recovered, up to scale and shift, from its edge directions.
Parallel rigidity was studied in graph theory \cite{matroid, pr_2006, pr_2009, pr_1989} and depends only on the graph $G([n],E)$ and the embedding dimension, which is 3 in our case. \"{O}zyesil, Singer and Basri~\cite{OzyesilSB15_SDR} noted its relevance for well-posedness of the camera location recovery problem. \"{O}zyesil and Singer \cite{cvprOzyesilS15} showed that it is sufficient for uniqueness of LUD when $|\Eb|=0$ (see Proposition 1 of \cite{cvprOzyesilS15}). We next show that parallel rigidity holds with overwhelming probability under the setting of Theorem \ref{thm:main}.

\begin{proposition}
\label{prop:para}
A graph $G([n],E)$ generated according to the setting of Theorem \ref{thm:main} is parallel rigid with overwhelming probability.
\end{proposition}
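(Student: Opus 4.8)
The plan is to separate the geometric and combinatorial content. Parallel rigidity in $\mathbb{R}^3$ is a generic property: for vertex locations in general position it depends only on the underlying graph $G([n],E)$ and the embedding dimension (see \cite{matroid, pr_2006, pr_2009, pr_1989}, and \cite{OzyesilSB15_SDR} for its role in location recovery). Since $\{\bt_i^*\}_{i=1}^n$ are i.i.d.\ $N(\b0,\bI)$, they are in general position almost surely; in particular no three of them are collinear, which is property~\ref{cond:gs_no_collinear}. Hence it suffices to prove that $G$ drawn from $G(n,p)$ with $C_0 n^{-1/3}\log^{1/3} n \le p \le 1$ is generically parallel rigid with overwhelming probability, and we may freely use genericity of $\{\bt_i^*\}_{i=1}^n$ when building the rigid structure.

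The combinatorial tool I would use is an inductive vertex-addition construction. A single edge is parallel rigid, since two locations are determined by their direction up to the global scale and translation. Moreover, attaching a new vertex to two already-placed, non-collinear vertices preserves parallel rigidity: an infinitesimal parallel redrawing must keep the displacement to each of the two neighbors parallel to itself, so the velocity of the new vertex lies on the intersection of two distinct lines through the origin and is therefore trivial. Consequently, if the vertices of $G$ admit an ordering $v_1,\dots,v_n$ with $v_1v_2\in E$ and with every later vertex adjacent to at least two earlier ones, then $G$ is generically parallel rigid. Thus the problem reduces to producing such an ordering for $G(n,p)$ with overwhelming probability.

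I would construct the ordering greedily in three phases, using the degree and codegree concentration that holds with overwhelming probability in this regime (the estimates behind $p$-typicality in Lemma~3.7 of \cite{HandLV15} give minimum degree at least $\tfrac12 np$ and every pair at least $\tfrac12 np^2$ common neighbors, with failure probability $e^{-\Omega(np)}\le e^{-\Omega(n^{2/3})}$ after a union bound). First, grow a rigid ``seed'' one vertex at a time: as long as the seed $S$ has fewer than $\tfrac12 np^2$ vertices, some pair in $S$ has an unplaced common neighbor, which can be appended with two back-edges, so $S$ reaches size of order $np^2$; here $C_0$ is taken large enough that $np^3\gtrsim C_0^3\log n$ is large, which is the same kind of lower bound on $p$ required elsewhere in Theorem~\ref{thm:main}. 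Second, attach every remaining vertex having at least two neighbors in the seed; since the expected number of in-seed neighbors of an outside vertex is of order $np^3\gg\log n$, all but at most $n\,e^{-\Omega(np^3)}$ vertices attach, and concentration makes this exceptional count smaller than $\tfrac12 np$ with overwhelming probability. Third, append the few leftover vertices in any order: the placed set now has size $n-o(n)$, so each leftover vertex still has at least two placed neighbors among its $\ge\tfrac12 np$ total neighbors. A union bound over the three phases yields the ordering, hence parallel rigidity, with overwhelming probability.

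The main obstacle is the beginning of the construction rather than its bulk. Away from the start, every unplaced vertex has many neighbors in the placed set, so attachment is routine; but when the placed set is tiny, a uniformly chosen vertex has only probability $\Theta(p^2)$ of being adjacent to two fixed placed vertices, so one cannot use an arbitrary ordering and must instead grow the seed through common neighbors, which is exactly where the codegree bound and the largeness of $C_0$ enter. Equivalently, the delicate point is ensuring the parallel rigidity matroid attains its full rank $3n-4$ without some sparse, under-connected substructure obstructing it; the explicit vertex-addition construction sidesteps this by certifying independence of each added constraint via genericity, leaving only the probabilistic verification that the construction can be carried through.
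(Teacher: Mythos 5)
Your geometric reduction is sound and, modulo phrasing, follows the same strategy as the paper: both arguments certify parallel rigidity by an inductive, Henneberg-type vertex addition in which every vertex after the first two is attached to at least two previously placed vertices, with genericity of the Gaussian locations supplying the non-degeneracy needed at each step. (Your vertex-addition lemma --- two edges to non-collinear placed vertices pin the new vertex --- is in fact a little cleaner than the paper's route through parallel-rigid quadrilaterals glued along common edges.) The gap is in the probabilistic construction of the ordering, specifically in your phase 2.

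The seed $S$ produced by phase 1 is an adaptive function of the edge set: to grow $S$ you repeatedly examine which unplaced vertices are common neighbors of pairs in $S$. Consequently, for a vertex $v$ that ends phase 1 unplaced, the edges between $v$ and $S$ are not fresh i.i.d.\ $\mathrm{Bern}(p)$ variables, and the claim that ``all but at most $n\,e^{-\Omega(np^3)}$ vertices attach,'' together with the assertion that the exceptional count is below $\tfrac{1}{2}np$ with \emph{overwhelming} probability, is not justified as written; even granting independence, an expectation computation only gives a polynomially small failure probability via Markov, and the overwhelming bound needs a Chernoff estimate for the sum of the rare indicators. The degree and codegree bounds you invoke hold for all pairs simultaneously and therefore make phase 1 deterministic, but they do not deterministically imply that most outside vertices have two neighbors in the particular seed you built, so phase 2 genuinely needs fresh randomness that your construction has already consumed. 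The paper sidesteps exactly this by taking the seed to be $\{k\}\cup N_k$ for a single vertex $k$: that seed is measurable with respect to the edges incident to $k$ alone, so for $l\notin\{k\}\cup N_k$ the edges from $l$ into $N_k$ remain independent $\mathrm{Bern}(p)$, and Hoeffding plus a union bound gives $\deg(l,N_k)\geq \tfrac{1}{4}np^2\geq 2$ for every such $l$ with overwhelming probability, eliminating the need for your phase 3 (at the price of a separate argument that $G(N_k,E(N_k))$ is connected, so that the vertices of $N_k$ can themselves be ordered with two back-edges each, one to $k$ and one to an earlier neighbor in $N_k$). Your argument becomes correct if you similarly decouple the seed from the edges used to attach the remaining vertices, for instance by building it inside a fixed half of the vertex set using only internal edges.
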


\begin{proof}
We use the following notation. For $S \subseteq [n]$, $E(S)=\{ij\in E:\,i,\,j\in S\}$ and for $i \in [n]$, $ \deg(i,S)= \sum_{j\in S} I(ij \in E)$.
For $E' \subseteq E$ and $i \in [n]$ denote $\deg(i, E')=\sum_{j\in [n]} I(ij\in E')$. Note that for $i \in S$, $\deg(i, E(S))=\deg(i,S)$.
For a node $k \in [n]$, $N_k$ denotes the set of neighbors of $k$. That is, $N_k$ includes all nodes that are connected to node $k$ by an edge.

Since $G([n],E)$ is $p$-typical, we may pick a node $k$ such that $\frac12 np \leq \deg(k,E)\leq 2 np$ and consequently $\frac12 np\leq |N_k|\leq 2np$. We first prove that  $G(N_k, E(N_k))$  is connected with overwhelming probability. The subgraph $G(N_k, E(N_k))$ is a realization of an Erd\"{o}s-R\'{e}nyi random graph $G(|N_k|, p)$ and it is connected with overwhelming probability. 
Indeed, for $1\leq m\leq |N_k|/2$,
\begin{align}
&\Pr(\exists \, m\text{ } \,\text{nodes that are isolated from the remaining nodes})\nonumber\\
\leq&\sum_{m=1}^{|N_k|/2} {|N_k|\choose m} (1-p)^{m(|N_k|-m)}\leq \sum_{m=1}^{|N_k|/2} {\left(\frac{e|N_k|}{m}\right)^m} e^{-pm(|N_k|-m)}\leq \frac{|N_k|}{2} \sup_{1\leq m\leq |N_k|/2}\left(\frac{e|N_k|}{e^{p(|N_k|-m)}}\right)^m \label{eq:stirling}\\
\nonumber
\leq & \frac{|N_k|}{2} \sup_{1\leq m\leq |N_k|/2}\left(\frac{e|N_k|}{e^{p|N_k|/2}}\right)^m\leq np \sup_{1\leq m\leq np}\left(\frac{2enp}{e^{np^2/4}}\right)^m \lesssim n^{4/3}\exp(-\Omega(n^{1/3}\log^{2/3} n))\label{eq:np}.
\end{align}
Note that the first inequality in \eqref{eq:stirling} uses a basic counting argument, where there are $m (|N_k|-m)$ possible edges between $m$ fixed elements and the remaining $|N_k|-m$ elements. The second inequality in \eqref{eq:stirling} follows from Stirling's approximation and the inequality $1-p \leq e^{-p}$. The last inequality in \eqref{eq:stirling} uses the assumption $p=\Omega(n^{-1/3}\log^{1/3}n)$.

Next, we prove that $G(\{k\}\cup N_k, E(\{k\}\cup N_k))$ is parallel rigid. Since $k$ is connected to all the vertices in $N_k$, and $E(N_k)$ forms a connected graph, the graph $G(\{k\}\cup N_k, E(\{k\}\cup N_k))$ can be generated by the following basic construction, which is similar to the Henneberg construction~\cite{henn} that preserves parallel rigidity at all of its steps. We start from a triangle $i_1i_2k$, where $i_1i_2\in E(N_k)$. By the connectivity of $G(N_k, E(N_k))$, there exists at least a vertex $i_3\in N_k$ that is connected to at least one of $i_1$ and $i_2$. WLOG we assume that $i_2i_3\in E(N_k)$ and thus $i_2$, $i_3$, $k$ form a triangle. Since the triangles $i_1i_2k$ and $i_2i_3k$ share the common edge $i_2k$, the graph $G(\{i_1,i_2,i_3,k\}, E(\{i_1,i_2,i_3,k\}))$ is parallel rigid. This procedure repeats by inductively adding vertices $i_4, i_5\dots i_{|N_k|}\in N_k$ to the existing graph. The graph $G(\{k\}\cup N_k, E(\{k\}\cup N_k))$, as well as each subgraph created in this procedure, are parallel rigid due to the following basic observation: If $G_1(V_1,E_1)$ and $G_1(V_2,E_2)$ are parallel rigid graphs and $E_1 \cap E_2 \neq \emptyset$, then $G(V_1 \cup V_2 ,E_1 \cup E_2)$ is parallel rigid.

At last, we prove that $G([n],E)$ is parallel rigid. Let $M_k = [n]\setminus (\{k\}\cup N_k)$.  By applying the Hoeffding's inequality, for any $l\in M_k$, $\deg(l, N_k)\geq \frac12 p|N_k|\geq \frac14 np^2$ with probability at least $1-\exp(-\Omega(np^2))$. By the assumption of Theorem \ref{thm:main} that $p\gtrsim n^{-1/3}\log^{1/3}n$ and by applying a union bound over $l\in M_k$, we obtain that $\min_{l\in M_k}\deg(l, N_k)\geq 2$ with overwhelming probability. Thus for any $l\in M_k$ there exists $i,j\in N_k$ such that $i$, $j$, $k$, $l$ form a quadrilateral that is parallel rigid in $\mathbb{R}^3$. Following the basic observation mentioned in proving the parallel rigidity of $G(\{k\}\cup N_k, E(\{k\}\cup N_k))$ and the fact that $i$, $j$, $k$ are already contained in the parallel rigid graph $G(\{k\}\cup N_k, E(\{k\}\cup N_k))$, we conclude that the graph $G(\{k, l\}\cup N_k, E(\{k, l\}\cup N_k))$ is parallel rigid.
By inductively adding vertices in $M_k$ in the same way, we obtain that the graph $G([n],E)=G(\{k\}\cup N_k\cup M_k, E(\{k\}\cup N_k\cup M_k))$ is parallel rigid.
\end{proof}

\section{On Uniqueness of LUD  and $c^*$}\label{sec:cstar}

In this section we show that under the setting of Theorem \ref{thm:main} with $|\Eb|>0$, the solution of LUD is unique with overwhelming probability.
Consequently, under this setting $c^*$ is uniquely determined with overwhelming probability.
Most of the discussion here assumes the deterministic setting mentioned earlier, though without assuming uniqueness of $c^*$.
The probabilistic setting only appears in Proposition \ref{thm:coro2}.

The following definition of self-consistency and non-self-consistency is essential in this section.
\begin{definition}
 Given any graph $G([n],E)$, a set of pairwise directions $\{\bga_{ij}\}_{ij\in E}\in S^2$ is self-consistent with respect to $G$
if there exist $\bt_1, \ldots, \bt_n \in \mathbb{R}^3$ that are not all identical such that $(\bt_i-\bt_j)=\|\bt_i-\bt_j\|\bga_{ij}$ for each $ij\in E$.
Otherwise $\{\bga_{ij}\}_{ij\in E}$ is non-self-consistent.
\end{definition}

Figure~\ref{fig:self} demonstrates an example of a graph with 3 vertices, where the corrupted pairwise directions are self-consistent and the locations obtained from them are different than the ground truth locations. This special example demonstrates a general phenomenon, which follows from the above definition. Whenever the corrupted edges are self-consistent, they give rise to a set of locations that are different than the ground truth locations. That is, non-self-consistency is a necessary condition for exact recovery when $|\Eb|>0$.

\begin{figure}[h!]
	\centering
	\includegraphics[width=.5\textwidth]{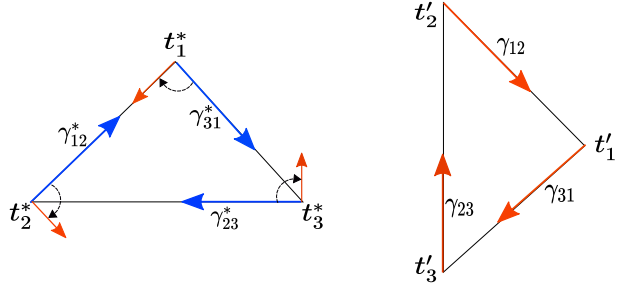}
	\caption{Demonstration of self-consistency. The figure on the left shows a graph with 3 vertices, ground truth locations $\bt^*_1$, $\bt^*_2$, $\bt^*_3$, ground truth pairwise directions $\bga^*_{21}$, $\bga^*_{32}$, $\bga^*_{13}$ and corrupted pairwise directions $\bga_{21}$, $\bga_{32}$, $\bga_{13}$. Note that the corrupted pairwise directions are obtained by $90$ degrees rotations of the ground truth ones.  The figure on the right shows a graph determined by the corrupted pairwise directions and its locations $\bt'_1$, $\bt'_2$, $\bt'_3$. Clearly, the latter locations are different than the ground truth ones for any arbitrary shift and scale.}
\label{fig:self}
\end{figure}

We next show that non-self-consistency is a sufficient condition for uniqueness of LUD in the corrupted case.
\begin{theorem}\label{thm:unique}
Given any graph $G([n],E)$ with non-self-consistent pairwise directions $\{\bga_{ij}\}_{ij\in E}$, the solution of LUD is unique.
\end{theorem}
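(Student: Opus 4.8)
The plan is to first eliminate the auxiliary variables $\{\alpha_{ij}\}$ and reduce to a uniqueness statement about the locations alone. For fixed locations, each optimal $\alpha_{ij}$ minimizes $\alpha\mapsto\|\bt_i-\bt_j-\alpha\bga_{ij}\|$ over $\alpha\ge 1$; since the squared objective $\|\bt_i-\bt_j\|^2-2\alpha\langle\bga_{ij},\bt_i-\bt_j\rangle+\alpha^2$ is strictly convex in $\alpha$, this minimizer is unique and is a function of $(\bt_i,\bt_j)$. Hence any two LUD solutions sharing the same locations must share the same $\alpha$'s, and it suffices to prove that the location part is unique, i.e.\ that the reduced convex objective $G(\{\bt_i\})=\sum_{ij\in E}\dist(\bt_i-\bt_j,R_{ij})$, with $R_{ij}=\{\alpha\bga_{ij}:\alpha\ge1\}$ and each summand the distance to a convex set (as noted in Section~\ref{sec:proof1}), has a unique minimizer subject to $\sum_i\bt_i=\b0$.

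Next I would assume for contradiction that there are two minimizers $\{\bt_i\}$ and $\{\bt_i'\}$ with different location parts and examine the segment $\bt_i(s)=(1-s)\bt_i+s\bt_i'$. Convexity of $G$ together with minimality at both endpoints forces $G$ to be constant along the segment; as $G$ is a sum of convex edge terms whose total is affine, each edge term $s\mapsto\dist(\bt_i(s)-\bt_j(s),R_{ij})$ is itself affine on $[0,1]$. Writing $\bu_{ij}$ and $\bv_{ij}$ for the residuals $\bt_i-\bt_j-\alpha_{ij}\bga_{ij}$ and $\bt_i'-\bt_j'-\alpha_{ij}'\bga_{ij}$ at the two endpoints, tightness of midpoint convexity of the distance-to-a-convex-set yields, on every edge, equality in the triangle inequality $\|\bu_{ij}+\bv_{ij}\|=\|\bu_{ij}\|+\|\bv_{ij}\|$. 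Consequently $\bu_{ij}$ and $\bv_{ij}$ are non-negative multiples of a common unit vector $\be_{ij}$ (or both vanish), so the residual direction is frozen along the entire segment.

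Finally I would convert this rigidity into a violation of non-self-consistency. The constraint $\sum_i\bt_i=\b0$ removes the global-translation freedom of $G$, so a genuine second minimizer corresponds to a nonzero relative motion $\boldsymbol{w}_i-\boldsymbol{w}_j$ (with $\boldsymbol{w}_i=\bt_i'-\bt_i$) that preserves every edge distance. Using the frozen directions together with the variational projection identities $\langle\bu_{ij},\bga_{ij}\rangle=0$ when $\alpha_{ij}>1$ and $\langle\bu_{ij},\bga_{ij}\rangle\le0$ when $\alpha_{ij}=1$ (and the analogous statements for $\bv_{ij}$), I would propagate this motion across the connected graph $G$ to build a configuration $\{\bs_i\}$, not all equal, with $\bs_i-\bs_j=\|\bs_i-\bs_j\|\,\bga_{ij}$ on every edge — a self-consistency witness, contradicting the hypothesis and forcing $\{\bt_i\}=\{\bt_i'\}$. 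The main obstacle is precisely this last step: the residuals need not vanish and the frozen directions $\be_{ij}$ need not be parallel to $\bga_{ij}$, so the raw difference $\{\bt_i-\bt_i'\}$ is generally \emph{not} a self-consistency witness. The heart of the argument is therefore to show that the frozen-residual motion can be transported into an exact realization of $\{\bga_{ij}\}$, treating carefully the endpoint edges where $\alpha_{ij}=1$ so that the resulting realization is genuinely non-degenerate.
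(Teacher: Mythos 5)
Your setup matches the paper's: both arguments take two minimizers of \eqref{eq:LUD}, observe that convexity forces the objective to be constant along the segment joining them, and extract a rigidity statement about the per-edge residuals $\hat\be_{ij}=\hat\bt_i-\hat\bt_j-\hat\alpha_{ij}\bga_{ij}$ and $\be_{ij}'=\bt_i'-\bt_j'-\alpha_{ij}'\bga_{ij}$. (Your preliminary reduction to a location-only objective via the distance-to-a-ray reformulation is harmless; the paper works with the joint problem directly and reaches the same point.) The difference lies in what is extracted from constancy and, crucially, in whether the argument is finished. You stop at ``$\hat\be_{ij}$ and $\be_{ij}'$ are nonnegative multiples of a common unit vector'' and then explicitly defer the conversion of this into a self-consistency witness, conceding that the frozen directions need not be parallel to $\bga_{ij}$ and that the raw difference $\{\hat\bt_i-\bt_i'\}$ need not realize the $\bga_{ij}$. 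That deferred step is the entire content of the theorem --- it is the only place the non-self-consistency hypothesis can enter --- so the proposal as written does not prove the statement.

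The paper closes this by drawing the stronger conclusion $\hat\be_{ij}=\be_{ij}'$ for every edge (equality of residuals, not mere parallelism). Once the residuals are equal they cancel in the difference: writing $\Delta\bt_i=\hat\bt_i-\bt_i'$ and $\Delta\alpha_{ij}=\hat\alpha_{ij}-\alpha_{ij}'$, one gets $\Delta\bt_i-\Delta\bt_j=\Delta\alpha_{ij}\bga_{ij}$, hence $\Delta\bt_i-\Delta\bt_j=\|\Delta\bt_i-\Delta\bt_j\|\bga_{ij}$ on every edge, so $\{\Delta\bt_i\}$ is precisely a self-consistency witness unless all $\Delta\bt_i$ coincide, and the constraint $\sum_i\bt_i=\b0$ then forces $\Delta\bt_i\equiv\b0$. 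In other words, the obstacle you flag (residual directions not parallel to $\bga_{ij}$) evaporates because the residual contribution to $\Delta\bt_i-\Delta\bt_j$ is $\hat\be_{ij}-\be_{ij}'=\b0$, leaving only the $\bga_{ij}$-component. To repair your version, the missing ingredient is exactly the upgrade from ``common direction, possibly different lengths'' to ``equal vectors''; your finer analysis shows that affinity of each edge term alone gives only the former, so some additional input is needed --- e.g.\ combining the fact that the per-edge slopes $\|\be_{ij}'\|-\|\hat\be_{ij}\|$ sum to zero with the optimality conditions $\langle\hat\be_{ij},\bga_{ij}\rangle\le 0$ and $\langle\be_{ij}',\bga_{ij}\rangle\le 0$ that you already recorded. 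As written, your proof ends exactly where the real work begins.
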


\begin{proof}[Proof of Theorem~\ref{thm:unique}]

Assuming that $\{\bga_{ij}\}_{ij\in E}$ is non-self-consistent, we will show that any two solutions $(\{\hat\bt_i\}_{i=1}^n$, $\{\hat\alpha_{ij}\}_{ij \in E})$
and $(\{\bt_i'\}_{i=1}^n$, $\{\alpha'_{ij}\}_{ij \in E})$ of \eqref{eq:LUD} are the same.
For $0 \leq {\lambda} \leq 1$, define $\bt_i^{\lambda} = (1-{\lambda}) \hat\bt_i + {\lambda} \bt_i'$ and
$\alpha_{ij}^{\lambda} = (1-{\lambda}) \hat\alpha_{ij} + {\lambda} \alpha'_{ij}$. We note that since \eqref{eq:LUD} 
is a convex optimization problem,
for any $0 \leq {\lambda} \leq 1$, $(\{\bt_i^{\lambda}\}_{i=1}^n$, $\{\alpha_{ij}^{\lambda}\}_{ij \in E})$
is also a solution of \eqref{eq:LUD}. Therefore, the objective function evaluated at the solution $(\{\bt_i^{\lambda}\}_{i=1}^n$,
$\{\alpha_{ij}^{\lambda}\}_{ij \in E})$, namely $F({\lambda}) = \sum_{ij \in E} \|\bt_i^{\lambda}-\bt_j^{\lambda}-\alpha_{ij}^{\lambda} \bga_{ij}\|$, is constant on $[0,1]$.
We denote $\hat\be_{ij}=\hat\bt_i-\hat\bt_j-\hat\alpha_{ij}\bga_{ij}$ and $\be_{ij}'=\bt_i'-\bt_j'-\alpha_{ij}'\bga_{ij}$ and rewrite $F({\lambda})$ as
\begin{equation*}
F({\lambda})=\sum_{ij\in E}\|\hat\be_{ij} + {\lambda} (\be_{ij}'-\hat\be_{ij})\|= \sum_{ij\in E}\sqrt{\|\be_{ij}'-\hat\be_{ij}\|^2{\lambda}^2+2{\lambda}\hat\be_{ij}^T(\be_{ij}'-\hat\be_{ij})+\|\hat\be_{ij}\|^2}.
\end{equation*}
Since $F$ is constant, this equation implies that $\hat \be_{ij}=\be_{ij}'$ for all $ij\in E$. That is,
\begin{equation}\label{eq:sys}
\hat\bt_i-\hat\bt_j-\hat\alpha_{ij}\bga_{ij}=\bt_i'-\bt_j'-\alpha_{ij}'\bga_{ij} \ \text{ for } ij\in E.
\end{equation}
Let $\Delta \bt_i=\hat\bt_i-\bt_i'$ for $i\in [n]$ and $\Delta \alpha_{ij}=\hat\alpha_{ij}-\alpha_{ij}'$ for $ij \in E$. We rewrite \eqref{eq:sys} as
\begin{equation}\label{eq:sys1}
\Delta\bt_i-\Delta\bt_j=\Delta\alpha_{ij}\bga_{ij} \ \text{ for } ij\in E.
\end{equation}
Since $\|\bga_{ij}\|=1$, \eqref{eq:sys1} implies that
\begin{equation}\label{eq:sys2}
\Delta\bt_i-\Delta\bt_j=\|\Delta\bt_i-\Delta\bt_j\|\bga_{ij} \ \text{ for } ij\in E.
\end{equation}
The non-self-consistency of $\{\bga_{ij}\}_{ij \in E}$ implies that the elements of the solution $\{\Delta \bt_i\}_{i=1}^n$ of \eqref{eq:sys2} are all identical. Consequently, for all $i \in [n]$, $\hat \bt_i-\bt_i'$ is a constant vector in $\mathbb{R}^3$. The constraint $\sum_i\bt_i=\b0$ of \eqref{eq:LUD1} implies that the constant vector is zero and thus the solution is unique.
\end{proof}

Proposition \ref{thm:coro2} below guarantees  with overwhelming probability the non-self-consistency of $\{\bga_{ij}\}_{ij\in E}$  assuming both corruption and the setting of Theorem~\ref{thm:main}. Combined with Theorem~\ref{thm:unique}, it concludes the uniqueness of LUD in the corrupted case.
The proof of this result depends on Lemma \ref{thm:deg} below, which demonstrates a necessary condition for self-consistency. Before stating and proving these results, we introduce the following notation.

Let $G([n],E)$ be a graph, $T=\{\bt_i^*\}_{i=1}^n$ be a set of distinct vertex locations and assume that the assigned pairwise directions $\{\bga_{ij}\}_{ij\in E}$ is self-consistent and $\{\bga_{ij}\}_{ij\in E}\neq \{\bga_{ij}^*\}_{ij\in E}$.
As clarified above, $\{\bga_{ij}\}_{ij\in E}$ is the set of true pairwise directions of a set of locations $T' = \{\bt_i'\}_{i=1}^n \neq T$
and $T$ cannot be obtained from $T'$ by scaling and shifting.
      One may view $T'$ as perturbed vertices of $T$, even though the actual perturbation is of $\{\bga_{ij}\}_{ij\in E}$.
For $S \subset [n]$, denote  $T(S)=\{\bt_i^*\}_{i\in S}$ and $T'(S)=\{\bt_i'\}_{i\in S}$. We also use the notation $E(S)$, $\deg(i,S)$ and $\deg(i,E')$ (for $E' \subseteq E$), which was introduced in Appendix~\ref{sec:parallel} (see proof of Proposition \ref{prop:para}).
We say that $i$, $j\in [n]$ are undeformed and denote it by $i\sim j$, if $i\neq j$ and $\exists$ $\kappa>0$ such that $\bt_i^*-\bt_j^*= \kappa(\bt_i'-\bt_j')$.
Otherwise, we say that $i$ and $j$ are deformed and denote $i\nsim j$. Note that by definition $i \nsim i$. For each $i\in [n]$, we define the undeformed set $S_i=\{j\in [n]: j\sim i\}$.  The following lemma shows a critical property of self-consistent corruption. That is, for any self-consistent corruption of pairwise directions, there exists a vertex such that more than half of the rest of vertices are deformed with respect to it.
\begin{lemma}\label{thm:deg}
Let $G([n],E)$ be a graph and let $T=\{\bt_i^*\}_{i=1}^n$ be a set of distinct vertex locations. If the assigned pairwise directions $\{\bga_{ij}\}_{ij\in E}$ is self-consistent and $\{\bga_{ij}\}_{ij\in E}\neq \{\bga_{ij}^*\}_{ij\in E}$, then there exists $j\in [n]$  such that $|S_j|< n/2$.
\end{lemma}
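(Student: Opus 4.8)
The plan is to argue by contradiction: I assume that $|S_j|\ge n/2$ for every $j\in[n]$ and show that this forces $T=\{\bt_i^*\}_{i=1}^n$ to be a scaled and shifted copy of $T'=\{\bt_i'\}_{i=1}^n$, contradicting the standing hypothesis that the deformation is genuine (that $T$ cannot be obtained from $T'$ by scaling and shifting, equivalently $\{\bga_{ij}\}_{ij\in E}\neq\{\bga_{ij}^*\}_{ij\in E}$). Recall that $i\sim j$ means $\bt_i^*-\bt_j^*=\kappa_{ij}(\bt_i'-\bt_j')$ for some $\kappa_{ij}>0$; since the vertices of $T$ are distinct, $\bt_i'\neq\bt_j'$ and $\kappa_{ij}=\|\bt_i^*-\bt_j^*\|/\|\bt_i'-\bt_j'\|$ is a well-defined positive number attached to each undeformed pair. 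The first point I would exploit is that the assumption makes the undeformed graph $H$ on $[n]$ (with edges $\{ij:i\sim j\}$) dense: if $i\nsim j$ then $S_i,S_j\subseteq[n]\setminus\{i,j\}$, so $|S_i\cap S_j|\ge|S_i|+|S_j|-(n-2)\ge 2$, and in particular $H$ has minimum degree at least $n/2$.

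Next I would record three elementary identities obtained by adding and subtracting the defining relations. \emph{Gluing:} if $i\sim j$ and $j\sim k$ with $\kappa_{ij}=\kappa_{jk}$, then $\bt_i^*-\bt_k^*=\kappa_{ij}(\bt_i'-\bt_k')$, so $i\sim k$ with the same scaling. \emph{Triangle rigidity:} if $i\sim j$, $j\sim k$, $i\sim k$ and $\bt_i^*,\bt_j^*,\bt_k^*$ are not collinear, then equating the two expressions for $\bt_i^*-\bt_k^*$ gives $(\kappa_{ij}-\kappa_{ik})(\bt_i'-\bt_j')=(\kappa_{ik}-\kappa_{jk})(\bt_j'-\bt_k')$, and linear independence of $\bt_i'-\bt_j'$ and $\bt_j'-\bt_k'$ forces $\kappa_{ij}=\kappa_{jk}=\kappa_{ik}$. \emph{Distinctness:} if two vertices $a,b$ were undeformed with two different scalings $c\neq c'$, then $(c-c')(\bt_a'-\bt_b')=\b0$ would force $\bt_a'=\bt_b'$ and hence $\bt_a^*=\bt_b^*$, which is impossible. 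Using Gluing, the relation ``$i\sim j$ with $\kappa_{ij}=c$'' is closed under triangles, so for each value $c$ the corresponding subgraph of $H$ is a disjoint union of cliques; on each such clique $K$ the map $i\mapsto\bt_i^*-c\,\bt_i'$ is constant, i.e. $T|_K$ is the similarity $c\,T'|_K+\mathrm{const}$. By Distinctness, two cliques carrying different scalings share at most one vertex, and two distinct cliques with the same scaling are vertex-disjoint; hence any two of these cliques meet in at most one vertex and each undeformed pair lies in exactly one of them.

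The argument then closes cleanly in the single-scaling case. Suppose all undeformed pairs share one common scaling $c$. Then $H$ itself is a vertex-disjoint union of cliques, and for a vertex $v$ in a clique $K$ we have $S_v=K\setminus\{v\}$, so $|K|-1\ge n/2$ and $|K|>n/2$. Since two disjoint subsets of $[n]$ of size exceeding $n/2$ cannot coexist, there is exactly one clique, and as every vertex has at least one undeformed neighbor this clique is all of $[n]$; thus $H=K_n$. Consequently $\bt_i^*-\bt_j^*=c(\bt_i'-\bt_j')$ for all $i,j$ with $c>0$, so $T=c\,T'+\mathrm{const}$. This makes every pairwise direction of $T$ equal to that of $T'$, i.e. $\{\bga_{ij}\}_{ij\in E}=\{\bga_{ij}^*\}_{ij\in E}$, contradicting the hypothesis.

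The main obstacle is to rule out the presence of several distinct scalings, and I expect this to be the delicate part. The degree bound alone does not exclude it: a family of cliques that pairwise meet in at most one vertex (a partial linear space) can cover at least half of all pairs while respecting $|S_j|\ge n/2$ without being a single clique, so the finish above cannot be reached by pure counting. The resolution must be geometric. By Triangle rigidity, whenever two cliques of different scalings are linked through a third vertex by an undeformed triangle, that triangle is forced to be collinear; propagating this should show that sustaining two different scalings across a graph of minimum degree $n/2$ forces large collinear subconfigurations of the distinct points $\{\bt_i^*\}_{i=1}^n$, along which the edge directions are automatically preserved. I would then combine this with Distinctness and the density of $H$ to conclude that either the scalings must in fact coincide (reducing to the clean single-scaling case) or that the deformation preserves all edge directions, i.e. $\{\bga_{ij}\}_{ij\in E}=\{\bga_{ij}^*\}_{ij\in E}$; either outcome contradicts the hypothesis. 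Making this collinearity propagation precise, and verifying it against the exact $n/2$ threshold, is where the real work lies.
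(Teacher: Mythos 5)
Your opening move is identical to the paper's: assume $|S_j|\ge n/2$ for all $j$, pick a deformed pair $k\nsim l$, and use $S_k\cup S_l\subseteq[n]\setminus\{k,l\}$ to get $|S_k\cap S_l|\ge 2$. But from that point on you take a different route, and it has a genuine gap that you yourself flag: the case of several distinct scaling constants is left unresolved. Your proposed repair --- propagating collinearity through undeformed triangles --- is unlikely to close it, because the pair that you ultimately need to control, namely the deformed pair $kl$, is by definition never an edge of an undeformed triangle; your Gluing and Triangle-rigidity identities only ever relate pairs that are already known to be undeformed, so they cannot be used to force $k\sim l$. The single-scaling case you do settle cleanly, but that case is not where the difficulty lies.

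The missing idea is to exploit the two common undeformed neighbors directly. Take $a,b\in S_k\cap S_l$, so that all four edges $ak$, $bk$, $al$, $bl$ have their directions preserved between $T$ and $T'$. The four-cycle on $\{a,b,k,l\}$ with these four edges is parallel rigid in $\mathbb{R}^3$ provided the four locations are not coplanar (this is the minimally parallel-rigid graph in three dimensions; see Figure 4(d) of \cite{OzyesilSB15_SDR}). Parallel rigidity then forces $T(\{a,b,k,l\})$ and $T'(\{a,b,k,l\})$ to agree up to a single scale and shift, hence $k\sim l$ --- the desired contradiction, with no case analysis on scaling constants at all. Note that this step is not purely deterministic: it needs the non-coplanarity of $\{\bt_a^*,\bt_b^*,\bt_k^*,\bt_l^*\}$, which the paper obtains almost surely from the HLV model (the lemma is only ever invoked in that probabilistic setting). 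Your write-up, which treats the statement as fully deterministic, would in any event need to import some such general-position hypothesis; without it even the quadrilateral argument fails, and with it your multi-scaling obstruction dissolves immediately.
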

\begin{proof}
Assume on the contrary that for all $j\in [n]$, $|S_j| \geq n/2$.
Since $|\Eb|\neq 0$, there exists $k, l \in [n]$ such that $k\nsim l$, which implies that
$\{k,l\}\cap (S_k\cup S_l)=\emptyset$ and $|S_k\cup S_l|\leq n-2$.
Consequently, $|S_k\cap S_l|=|S_k|+|S_l|-|S_k\cup S_l|\geq n/2+n/2-(n-2)=2$. Denote by $a$ and $b$ two of the elements of $S_k\cap S_l$ and note that by definition of the undeformed sets $S_k$ and $S_l$, $a \sim k$, $b \sim k$, $a \sim l$ and $b \sim l$. Due to the HLV model, the probability that $\{ak,bk,al,bl\}$ lies on a plane in $\mathbb{R}^3$ is zero and thus the graph $G(\{a,b,k,l\},\{ak,bk,al,bl\})$ is parallel rigid in $\mathbb{R}^3$~\cite[Figure 4(d)]{OzyesilSB15_SDR}. Therefore, $T(\{a,b,k,l\})=T'(\{a,b,k,l\})$ up to scale and shift and $k\sim l$, which results in contradiction.
\end{proof}

\begin{proposition}
\label{thm:coro2}
In the setting of Theorem \ref{thm:main}, if $|\Eb|\neq 0$, then $\{\bga_{ij}\}_{ij\in E}$ is non-self-consistent with overwhelming probability.
\end{proposition}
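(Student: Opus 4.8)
The plan is to argue by contradiction, reducing non-self-consistency to the combinatorial criterion of Lemma~\ref{thm:deg} together with the local parallel rigidity of the framework $K_{2,2}$ (a $4$-cycle) in $\mathbb{R}^3$. Suppose that, under the setting of Theorem~\ref{thm:main} with $|\Eb|\neq 0$, the directions $\{\bga_{ij}\}_{ij\in E}$ were self-consistent. In the relevant case of genuine corruption, $\{\bga_{ij}\}_{ij\in E}\neq\{\bga_{ij}^*\}_{ij\in E}$, so there is a realization $T'=\{\bt_i'\}_{i=1}^n$ of these directions that cannot be obtained from $T=\{\bt_i^*\}_{i=1}^n$ by scale and shift, and the undeformed relation $\sim$ and the sets $S_j$ are defined as in the paragraph preceding Lemma~\ref{thm:deg}. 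By Lemma~\ref{thm:deg} there exists $j\in[n]$ with $|S_j|<n/2$, hence a vertex $m$ with $m\nsim j$. I will show that, with overwhelming probability, such a deformed pair cannot exist.

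The key observation is that every good edge is undeformed: if $kl\in\Eg$, then $\bt_k'-\bt_l'$ is a positive multiple of $\bga_{kl}=\bga_{kl}^*$, which is the direction of $\bt_k^*-\bt_l^*$, so $k\sim l$ and thus $l\in S_k$. Consequently, any common good neighbor $k$ of the deformed pair $j,m$ — an index $k$ with $jk,mk\in\Eg$ — lies in $S_j\cap S_m$. The plan is to produce at least two such common good neighbors $k_1,k_2$. Then the four distinct locations $\bt_j^*,\bt_m^*,\bt_{k_1}^*,\bt_{k_2}^*$, together with the edges $jk_1,jk_2,mk_1,mk_2$, form a $K_{2,2}$, which is parallel rigid in $\mathbb{R}^3$ for generic points exactly as in the proof of Lemma~\ref{thm:deg} (see \cite[Figure 4(d)]{OzyesilSB15_SDR}). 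Since both $T$ and $T'$ realize the true directions on these four good edges, parallel rigidity forces $T'(\{j,m,k_1,k_2\})$ to equal $T(\{j,m,k_1,k_2\})$ up to scale and shift, whence $j\sim m$ — contradicting $m\nsim j$.

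It therefore remains to guarantee, with overwhelming probability and uniformly over all pairs, the existence of two common good neighbors. I would work on the event that $G$ is $p$-typical and that the vertex locations are in general position, both of which hold with overwhelming probability under the hypothesis $p\geq C_0 n^{-1/3}\log^{1/3}n$: the first via the codegree concentration already invoked for property~\ref{cond:gs_p_typical} of Definition~\ref{def:goodshape}, and the second almost surely because the $\bt_i^*$ are i.i.d.\ Gaussian. On this event the codegree of $j,m$ in $G$ is at least $\tfrac12 np^2$; deleting the common neighbors reached through a bad edge, of which there are at most $\deg_{\Eb}(j)+\deg_{\Eb}(m)\leq 2\epsilon_b n$, leaves at least $\tfrac12 np^2-2\epsilon_b n$ common good neighbors.

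The main obstacle — indeed the only quantitative point — is ensuring $\tfrac12 np^2-2\epsilon_b n\geq 2$, so that $k_1,k_2$ exist despite adversarial placement of the bad edges. This follows from the assumptions of Theorem~\ref{thm:main}: since $\epsilon_b=O(p^{7/3}/\log^{9/2}n)$ and $p\leq 1$, one has $\epsilon_b/p^2=O(p^{1/3}/\log^{9/2}n)=o(1)$, while $np^2\geq C_0^2 n^{1/3}\log^{2/3}n\to\infty$; hence $\tfrac12 np^2-2\epsilon_b n\geq \tfrac14 np^2\geq 2$ for $n$ large. Combining this with the contradiction above shows that on the good event no self-consistent, genuinely corrupted direction assignment exists, i.e.\ $\{\bga_{ij}\}_{ij\in E}$ is non-self-consistent with overwhelming probability. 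Feeding this into Theorem~\ref{thm:unique} then yields uniqueness of LUD, and hence of $c^*$, in the corrupted case.
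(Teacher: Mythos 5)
Your proposal is correct, but it reaches the contradiction by a genuinely different route than the paper. The paper stops at the single vertex $j$ furnished by Lemma~\ref{thm:deg}: every good edge incident to $j$ lands in $S_j$, so every edge from $j$ into the deformed set $S_j^c$ (which has more than $n/2$ elements) must be bad; a Chernoff bound then gives $\deg(j,E(S_j^c))=\Omega(np)$ with overwhelming probability, contradicting $n\epsilon_b=o(np)$. You instead take a deformed \emph{pair} $j\nsim m$, observe that any common neighbor reached by two good edges is undeformed relative to both, and use the codegree bound $\tfrac12 np^2$ minus the at most $2\epsilon_b n$ bad-edge neighbors to extract two such vertices $k_1,k_2$; the generically parallel-rigid $K_{2,2}$ on $\{j,m,k_1,k_2\}$ then forces $j\sim m$, the same rigidity device the paper itself uses inside the proof of Lemma~\ref{thm:deg}. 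The trade-offs: the paper's count needs only $\epsilon_b=o(p)$ while yours needs $\epsilon_b=o(p^2)$, so yours is less economical in corruption tolerance (both are comfortably satisfied under $\epsilon_b=O(p^{7/3}/\log^{9/2}n)$). On the other hand, your probabilistic input is a uniform codegree lower bound over all pairs, a property of $G$ alone that holds with overwhelming probability by Chernoff and a union bound and is established \emph{before} the adversary places $\Eb$; after conditioning on it your counting is entirely deterministic. This sidesteps the conditioning subtlety in the paper's application of \eqref{eq:chernoff} to $\{I(ij\in E(S_j^c))\}_{i\in S_j^c\setminus\{j\}}$, where the index set $S_j^c$ itself depends on the graph and the corruption. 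Your argument also shows that Lemma~\ref{thm:deg} is essentially dispensable here, since the $K_{2,2}$ device rules out \emph{every} deformed pair, not just one attached to a vertex with $|S_j|<n/2$. Two small points to tidy up: state explicitly that the codegree lower bound holds with overwhelming probability (the paper only quotes $1-O(n^{-5})$ for $p$-typicality, but the one-sided Chernoff bound for codegrees gives $1-n^2e^{-\Omega(np^2)}$, which suffices), and note, as the paper implicitly does, that the inference ``$kl\in\Eg\Rightarrow k\sim l$'' presumes $\bt_k'\neq\bt_l'$ in the self-consistent realization --- a degenerate case both your argument and the paper's gloss over in the same way.
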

\begin{proof}
We show contradiction assuming that $\{\bga_{ij}\}_{ij\in E}$ is self-consistent. By Lemma \ref{thm:deg}, there exists $j\in [n]$  such that $|S_j|< n/2$. Note that $\deg(j, \Eb)=\deg(j, E(S_j^c))$. Therefore, $n\epsilon_b=\max_{i\in[n]}\deg(i, \Eb)\geq \deg(j,E(S_j^c))$. For each $i\in S_j^c\setminus \{j\}$,  $I(ij\in E(S_j^c))$ is a Bernoulli random variable Bern$(p)$. Thus, by applying \eqref{eq:chernoff} with $\delta=1/2$, $\mu=p$ and the number of terms $|S_j^c|-1=n-|S_j|-1>n/2-1$, we obtain that
\begin{equation}\label{eq:EAdeg}
\deg(j,E(S_j^c))=\sum_{i\in S_j^c \setminus\{j\}} I\left(ij\in E(S_j^c)\right)> \frac12 \cdot (\frac{n}{2}-1)p \ \text{ w.p. } 1-2e^{-\frac{1}{12}(\frac{n}{2}-1)p}.
\end{equation}
Combining the assumption $p=\Omega(n^{-1/3}\log^{1/3}n)$ with \eqref{eq:EAdeg} implies that $n\epsilon_b\geq \deg(j,E(S_j^c))=\Omega(np)$ with probability $1-2 \cdot \exp(-\Omega(n^{2/3}\log^{1/3}n)$.
This contradicts the assumption of Theorem \ref{thm:main} that $n\epsilon_b=O(np^{7/3}/\log^{9/2}n)$.
\end{proof}


\bibliographystyle{abbrv}
\bibliography{refs_prop_16}

\begin{thebibliography}{10}

\bibitem{Nachimson_LS}
M.~Arie{-}Nachimson, S.~Z. Kovalsky, I.~Kemelmacher{-}Shlizerman, A.~Singer,
  and R.~Basri.
\newblock Global motion estimation from point matches.
\newblock In {\em 2012 Second International Conference on 3D Imaging, Modeling,
  Processing, Visualization {\&} Transmission, Zurich, Switzerland, October
  13-15, 2012}, pages 81--88, 2012.

\bibitem{BrandAT04_LS}
M.~Brand, M.~E. Antone, and S.~J. Teller.
\newblock Spectral solution of large-scale extrinsic camera calibration as a
  graph embedding problem.
\newblock In {\em Computer Vision - {ECCV} 2004, 8th European Conference on
  Computer Vision, Prague, Czech Republic, May 11-14, 2004. Proceedings, Part
  {II}}, pages 262--273, 2004.

\bibitem{CandesLMW11_robustpca}
E.~J. Cand{\`{e}}s, X.~Li, Y.~Ma, and J.~Wright.
\newblock Robust principal component analysis?
\newblock {\em J. {ACM}}, 58(3):11:1--11:37, 2011.

\bibitem{CandesT05_decode}
E.~J. Cand{\`{e}}s and T.~Tao.
\newblock Decoding by linear programming.
\newblock {\em {IEEE} Trans. Information Theory}, 51(12):4203--4215, 2005.

\bibitem{ChandrasekaranSPW11}
V.~Chandrasekaran, S.~Sanghavi, P.~A. Parrilo, and A.~S. Willsky.
\newblock Rank-sparsity incoherence for matrix decomposition.
\newblock {\em {SIAM} Journal on Optimization}, 21(2):572--596, 2011.

\bibitem{ChatterjeeG13_rotation}
A.~Chatterjee and V.~M. Govindu.
\newblock Efficient and robust large-scale rotation averaging.
\newblock In {\em {IEEE} International Conference on Computer Vision, {ICCV}
  2013, Sydney, Australia, December 1-8, 2013}, pages 521--528, 2013.

\bibitem{CoudronL12_reaper}
M.~Coudron and G.~Lerman.
\newblock On the sample complexity of robust {PCA}.
\newblock In {\em Advances in Neural Information Processing Systems 25: 26th
  Annual Conference on Neural Information Processing Systems 2012. Proceedings
  of a meeting held December 3-6, 2012, Lake Tahoe, Nevada, United States.},
  pages 3230--3238, 2012.

\bibitem{matroid}
M.~Develin, J.~L. Martin, and V.~Reiner.
\newblock Rigidity theory for matroids.
\newblock {\em Comment. Math. Helv.}, 82(1):197--233, 2007.

\bibitem{pr_2006}
T.~Eren, W.~Whiteley, and P.~N. Belhumeur.
\newblock Using angle of arrival (bearing) information in network localization.
\newblock In {\em Decision and Control, 2006 45th IEEE Conference on}, pages
  4676--4681. IEEE, 2006.

\bibitem{GoldsteinHLVS16_shapekick}
T.~Goldstein, P.~Hand, C.~Lee, V.~Voroninski, and S.~Soatto.
\newblock Shapefit and shapekick for robust, scalable structure from motion.
\newblock In {\em Computer Vision - {ECCV} 2016 - 14th European Conference,
  Amsterdam, The Netherlands, October 11-14, 2016, Proceedings, Part {VII}},
  pages 289--304, 2016.

\bibitem{Govindu01_LS}
V.~M. Govindu.
\newblock Combining two-view constraints for motion estimation.
\newblock In {\em 2001 {IEEE} Computer Society Conference on Computer Vision
  and Pattern Recognition {(CVPR} 2001), 8-14 December 2001, Kauai, HI, {USA}},
  pages 218--225, 2001.

\bibitem{Govindu04_Lie}
V.~M. Govindu.
\newblock Lie-algebraic averaging for globally consistent motion estimation.
\newblock In {\em 2004 {IEEE} Computer Society Conference on Computer Vision
  and Pattern Recognition {(CVPR} 2004), 27 June - 2 July 2004, Washington, DC,
  {USA}}, pages 684--691, 2004.

\bibitem{henn}
R.~Haas, D.~Orden, G.~Rote, F.~Santos, B.~Servatius, H.~Servatius, D.~Souvaine,
  I.~Streinu, and W.~Whiteley.
\newblock Planar minimally rigid graphs and pseudo-triangulations.
\newblock {\em Computational Geometry}, 31(1-2):31--61, 2005.

\bibitem{HandLV15}
P.~Hand, C.~Lee, and V.~Voroninski.
\newblock Shapefit: Exact location recovery from corrupted pairwise directions.
\newblock {\em Communications on Pure and Applied Mathematics}, 71(1):3--50,
  2018.

\bibitem{multiviewbook}
A.~Harltey and A.~Zisserman.
\newblock {\em Multiple view geometry in computer vision {(2.} ed.)}.
\newblock Cambridge University Press, 2006.

\bibitem{HartleyAT11_rotation}
R.~I. Hartley, K.~Aftab, and J.~Trumpf.
\newblock {L1} rotation averaging using the {W}eiszfeld algorithm.
\newblock In {\em The 24th {IEEE} Conference on Computer Vision and Pattern
  Recognition, {CVPR} 2011, Colorado Springs, CO, USA, 20-25 June 2011}, pages
  3041--3048, 2011.

\bibitem{hoeffding}
W.~Hoeffding.
\newblock Probability inequalities for sums of bounded random variables.
\newblock {\em J. Amer. Statist. Assoc.}, 58(301):13--30, 1963.

\bibitem{pr_2009}
B.~Jackson and T.~Jord{\'a}n.
\newblock Graph theoretic techniques in the analysis of uniquely localizable
  sensor networks.
\newblock In {\em Localization Algorithms and Strategies for Wireless Sensor
  Networks: Monitoring and Surveillance Techniques for Target Tracking}, pages
  146--173. IGI Global, 2009.

\bibitem{LMTZ2014}
G.~Lerman, M.~B. McCoy, J.~A. Tropp, and T.~Zhang.
\newblock Robust computation of linear models by convex relaxation.
\newblock {\em Foundations of Computational Mathematics}, 15(2):363--410, 2015.

\bibitem{MartinecP07_rotation}
D.~Martinec and T.~Pajdla.
\newblock Robust rotation and translation estimation in multiview
  reconstruction.
\newblock In {\em 2007 {IEEE} Computer Society Conference on Computer Vision
  and Pattern Recognition {(CVPR} 2007), 18-23 June 2007, Minneapolis,
  Minnesota, {USA}}, 2007.

\bibitem{chernoff}
M.~Mitzenmacher and E.~Upfal.
\newblock {\em Probability and computing: Randomized algorithms and
  probabilistic analysis}.
\newblock Cambridge university press, 2005.

\bibitem{MoulonMM13_Linfty}
P.~Moulon, P.~Monasse, and R.~Marlet.
\newblock Global fusion of relative motions for robust, accurate and scalable
  structure from motion.
\newblock In {\em {IEEE} International Conference on Computer Vision, {ICCV}
  2013, Sydney, Australia, December 1-8, 2013}, pages 3248--3255, 2013.

\bibitem{cvprOzyesilS15}
O.~{\"{O}}zyesil and A.~Singer.
\newblock Robust camera location estimation by convex programming.
\newblock In {\em {IEEE} Conference on Computer Vision and Pattern Recognition,
  {CVPR} 2015, Boston, MA, USA, June 7-12, 2015}, pages 2674--2683, 2015.

\bibitem{OzyesilSB15_SDR}
O.~{\"{O}}zyesil, A.~Singer, and R.~Basri.
\newblock Stable camera motion estimation using convex programming.
\newblock {\em {SIAM} Journal on Imaging Sciences}, 8(2):1220--1262, 2015.

\bibitem{sfmsurvey_2017}
O.~\"{O}zyesil, V.~Voroninski, R.~Basri, and A.~Singer.
\newblock A survey of structure from motion.
\newblock {\em Acta Numerica}, 26:305--364, 2017.

\bibitem{ravikumar2011}
P.~Ravikumar, M.~J. Wainwright, G.~Raskutti, and B.~Yu.
\newblock High-dimensional covariance estimation by minimizing
  $\ell_1$-penalized log-determinant divergence.
\newblock {\em Electron. J. Statist.}, 5:935--980, 2011.

\bibitem{SenguptaAGGJSB17}
S.~Sengupta, T.~Amir, M.~Galun, T.~Goldstein, D.~W. Jacobs, A.~Singer, and
  R.~Basri.
\newblock A new rank constraint on multi-view fundamental matrices, and its
  application to camera location recovery.
\newblock {\em {IEEE} Conference on Computer Vision and Pattern Recognition,
  {CVPR} 2017, Honolulu, Hawaii, USA, June 22-25, 2017}, pages 4798--4806,
  2017.

\bibitem{AAB}
Y.~Shi and G.~Lerman.
\newblock Estimation of camera locations in highly corrupted scenarios: All
  about that base, no shape trouble.
\newblock In {\em Proceedings of the IEEE Conference on Computer Vision and
  Pattern Recognition}, pages 2868--2876, 2018.

\bibitem{TronV09_CLS1}
R.~Tron and R.~Vidal.
\newblock Distributed image-based 3-d localization of camera sensor networks.
\newblock In {\em Proceedings of the 48th {IEEE} Conference on Decision and
  Control, {CDC} 2009, December 16-18, 2009, Shanghai, China}, pages 901--908,
  2009.

\bibitem{TronV14_CLS2}
R.~Tron and R.~Vidal.
\newblock Distributed 3-d localization of camera sensor networks from 2-d image
  measurements.
\newblock {\em {IEEE} Trans. Automat. Contr.}, 59(12):3325--3340, 2014.

\bibitem{pr_1989}
W.~Whiteley.
\newblock A matroid on hypergraphs, with applications in scene analysis and
  geometry.
\newblock {\em Discrete \& Computational Geometry}, 4(1):75--95, 1989.

\bibitem{1dsfm}
K.~Wilson and N.~Snavely.
\newblock Robust global translations with 1dsfm.
\newblock In {\em Computer Vision - {ECCV} 2014 - 13th European Conference,
  Zurich, Switzerland, September 6-12, 2014, Proceedings, Part {III}}, pages
  61--75, 2014.

\bibitem{XuCS12_robustpca}
H.~Xu, C.~Caramanis, and S.~Sanghavi.
\newblock Robust {PCA} via outlier pursuit.
\newblock {\em {IEEE} Trans. Information Theory}, 58(5):3047--3064, 2012.

\bibitem{ZhangL14_novel}
T.~Zhang and G.~Lerman.
\newblock A novel {M}-estimator for robust {PCA}.
\newblock {\em Journal of Machine Learning Research}, 15(1):749--808, 2014.

\bibitem{BATA}
B.~Zhuang, L.-F. Cheong, and G.~H. Lee.
\newblock Baseline desensitizing in translation averaging.
\newblock In {\em Proceedings of the IEEE Conference on Computer Vision and
  Pattern Recognition}, pages 4539--4547, 2018.

\end{thebibliography}

\end{document}